\documentclass[letterpaper]{article}
\usepackage[preprint]{aaai2027}
\usepackage[hyphens]{url}
\usepackage{graphicx}
\usepackage{natbib}
\usepackage{caption}
\usepackage{booktabs}
\usepackage{makecell}
\usepackage{multirow}
\usepackage{algorithmic}
\usepackage{algorithm}
\usepackage{amsmath}
\usepackage{amssymb}
\usepackage{mathtools}
\usepackage{amsthm}
\newtheorem{theorem}{Theorem}
\newtheorem{proposition}{Proposition}
\newtheorem{lemma}{Lemma}
\theoremstyle{definition}
\newtheorem{definition}{Definition}
\newcommand{\mycomment}[1]{\hfill{\footnotesize $\triangleright$ #1}}
\newcommand{\LINECOMMENT}[1]{\STATE {\footnotesize $\triangleright$ #1}}
\newcommand{\FUNCTION}[1]{\STATE \textbf{Function} #1}
\newcommand{\ENDFUNCTION}{}

\title{ResiSpec: Enhancing Multi-Candidate Speculative Sampling via Residual Distribution Shaping}
\author{
    Zhi-Kai Chen\textsuperscript{\rm 1,\rm 2},
    Jun-Jie Tao\textsuperscript{\rm 3},
    Wei-Xiang Mao\textsuperscript{\rm 3},
    De-Chuan Zhan\textsuperscript{\rm 1,\rm 2},
    Han-Jia Ye\textsuperscript{\rm 1,\rm 2}\corresponding
}

\affiliations{
    \textsuperscript{\rm 1}School of Artificial Intelligence,
    Nanjing University, China\\
    \textsuperscript{\rm 2}National Key Laboratory for Novel Software Technology,
    Nanjing University, China\\
    \textsuperscript{\rm 3}Nanjing University, China\\
}
\begin{document}
\maketitle



\begin{abstract}

The efficiency of Large Language Model (LLM) serving is fundamentally limited by the sequential nature of autoregressive decoding. Speculative Decoding (SD) mitigates this by using a lightweight draft model to speculate future tokens, which are then validated by the LLM in a single parallel forward pass. To further boost efficiency, multi-candidate schemes propose diverse candidate sets to increase the likelihood of token acceptance. However, we show that these schemes are bottlenecked by Residual Drift: a phenomenon where the rejection of initial candidates causes the residual target distribution to diverge from the draft model's predictions. This shift renders subsequent candidates ineffective and forces the system into expensive resampling. To resolve this, we propose ResiSpec, a framework that strategically reforms the proposal distribution during verification to anchor the residual target mass within the draft model's high-confidence regions. By mathematically re-aligning the verification process without compromising output exactness, ResiSpec prevents candidate obsolescence and achieves up to 1.92$\times$ speedup over state-of-the-art multi-candidate methods. Code is available at \url{https://github.com/Czzzk/Resispec}.


\end{abstract}

\section{Introduction}

The widespread adoption of Large Language Models (LLMs)~\cite{gpt4,llama2} has made efficient inference a critical priority for reducing both user latency and operational costs~\cite{efficiency_motivation}. However, standard auto-regressive (AR) decoding is fundamentally limited by its sequential nature, generating tokens one-by-one~\cite{specd_2}. This process creates a severe I/O bottleneck, as the system must repeatedly access the expanding Key-Value (KV) cache to compute attention for each new output~\cite{orca}. This sequential dependency prevents the system from fully leveraging the massive parallel processing power of modern GPUs, leaving compute cores significantly underutilized while waiting for memory traffic~\cite{flash_attention,lookahead,specud}. Consequently, the efficiency of LLM serving remains low, as most execution time is consumed by the overhead of accessing context rather than performing computation.

To overcome these sequential constraints, Speculative Decoding (SD)~\cite{specud,specd_2,specbench} introduces a ``speculate-then-verify'' paradigm. The core idea is to decouple the generation process: a lightweight draft model rapidly predicts a sequence of candidate tokens at a fraction of the cost, which the larger target model then validates in a single, parallel forward pass~\cite{staged,rest,adaptive_l_sd}. This approach directly addresses the I/O bottleneck by exploiting the Transformer's inherent ability to process multiple tokens concurrently with nearly the same memory latency as a single-token step~\cite{all_you_need,megatron_LM,efficiently_transformer}. By shifting from one-by-one generation to bulk verification, SD effectively increases the number of generated tokens per memory-access cycle. Consequently, performance is contingent upon acceptance length, where longer validated sequences translate into a substantial reduction in total inference steps.

To fully exploit this parallel capacity and maximize the ``tokens-per-load'' efficiency, recent advancements have transitioned from linear, single-sequence speculation to multi-candidate schemes, such as tree-based or batch-sampling verification~\cite{specinfer, medusa, eagle}. The core idea is to expand the search space by pre-sampling multiple candidate tokens for the same or branching positions. By presenting the target model with a diverse set of ``choices'' in a single forward pass, these schemes aim to increase the statistical probability that at least one high-quality continuation is accepted. This strategy effectively seeks to maximize the step size—the number of confirmed tokens—of each inference iteration.

While multi-candidate speculative decoding schemes—such as tree-based or batch verification—have significantly advanced LLM inference efficiency by expanding the search space beyond linear sequences, they encounter a critical scaling bottleneck. Theoretically, presenting more candidates should increase the probability of finding a high-quality match; however, in practice, the marginal utility of additional candidates diminishes rapidly as the batch size or tree width grows. We observe that as the verification chain lengthens, the acceptance probability of each subsequent candidate drops progressively, often leading to performance plateaus or even efficiency regressions. This raises a fundamental question: why do existing multi-candidate frameworks fail to fully leverage a larger search space to achieve proportional speedups?

We formalize this scalability limit as Residual Drift. This phenomenon stems from the mathematical necessity of maintaining ``exactness'' during verification: when the target model rejects a draft candidate, it must compensate by shifting its sampling distribution to a residual distribution. Crucially, this residual is naturally concentrated in the draft model’s ``blind spots''—semantic regions where the lightweight draft model has low predictive confidence or fails to capture complex dependencies. Because all candidates in a multi-candidate scheme are pre-sampled from the original draft distribution (before any rejections occur), they are inherently ill-suited for this shifted residual. Consequently, an early rejection often triggers a ``cascade of obsolescence'': the target model's updated criteria move so far away from the draft's knowledge base that the remaining pre-sampled candidates become statistically irrelevant, forcing expensive resampling and negating the parallelism benefits of the multi-candidate design.



To resolve this, we propose ResiSpec (Figure~\ref{method_overview}). Our key insight is that the mathematical path to exactness is not unique; the drift observed in prior work is a byproduct of sub-optimal verification criteria rather than an inevitable cost of zero-bias sampling. ResiSpec reformulates the verification criteria by integrating an auxiliary proxy of draft density, which steers the rejection residuals back toward the draft model’s high-confidence zones. By reshaping the transition logic, ResiSpec ensures that pre-sampled candidates remain viable throughout the verification chain without compromising the statistical integrity of the target model. 


\begin{itemize}
\item Characterization of Residual Drift: We formalize \textit{Residual Drift} as a fundamental bottleneck in multi-candidate speculative decoding, where sequential rejections shift the target distribution into the draft model's blind spots, invalidating parallel candidates.
\item The ResiSpec Framework: We propose ResiSpec, which reshapes verification criteria by introducing an auxiliary proxy to keep rejection residuals proximal to the draft model's high-density regions. This ensures the viability of pre-sampled candidates even after preceding rejections.
\item Theoretical Rigor and Speedup: We prove ResiSpec's mathematical exactness and demonstrate up to 1.92$\times$ speedup over multi-candidate methods while maintaining zero-bias sampling across benchmarks.
\end{itemize}

\section{Related Work}
\textbf{Auto-regressive Acceleration and Speculative Decoding.} 
Large Language Models (LLMs) generate text auto-regressively ~\cite{gpt2,gpt3,gpt4,llama}, a process constrained by memory bandwidth. Speculative Decoding (SD)~\cite{specud, specd_2} emerged as a pivotal lossless acceleration paradigm, using a small draft model to generate candidate tokens that the target model verifies in parallel. While SD guarantees mathematical equivalence to the original distribution, its performance is capped by the draft model's hit rate on a single linear sequence~\cite{specbench,survey_2025,2024_nips_theoretical}.

\textbf{Multi-Candidate and Tree-Based Speculation.}
To overcome the limited acceptance rate of linear speculative decoding, recent work explores multiple candidate paths in parallel. \textit{SpecInfer}~\cite{specinfer} introduced tree-based attention, enabling the target model to verify branching token sequences in a single forward pass. Subsequent methods improve this paradigm along two directions: candidate generation and tree construction. For candidate generation, draft-model-free methods such as \textit{EAGLE}~\cite{eagle, eagle_2}, \textit{Medusa}~\cite{medusa}, \textit{Hydra}~\cite{hydra}, and \textit{KOALA}~\cite{koala} use specialized heads or lightweight modules to generate candidates from the target model's hidden states. For tree construction, \textit{Sequoia}~\cite{sequoia} uses dynamic programming to optimize tree shape under a budget, while prior work~\cite{optimal_2025} studies theoretical criteria for selecting tokens in the verification tree.


\textbf{Scalability Limits of Standard Verification Protocols}
Despite the success of multi-candidate speculation, standard sampling protocols fail to scale efficiently with candidate counts, leading to a bottleneck in acceptance length. We bridge this gap with ResiSpec, a new sampling paradigm tailored for multi-candidate verification. By optimizing the sampling logic for large-scale branching trees, ResiSpec significantly enhances the marginal utility of additional candidates and achieves a higher expected token yield.

\section{Preliminary}
\noindent\textbf{Speculative Decoding.} The key idea behind speculative decoding is to accelerate autoregressive sampling by leveraging a draft model to propose candidate tokens, which are then verified by the target model. Let the target model define a conditional distribution $P_{\texttt{tgt}}(x \mid x_{<t})$ and the draft model define $P_{\texttt{dft}}(x \mid x_{<t})$, where t is the current step. The algorithm operates in two stages: drafting and verification.

During the drafting stage, the draft model generates $\gamma \in \mathbb{Z}^+$ candidate tokens from its own sampling distribution $P_{\texttt{dft}}(x_t \mid x_{<t})$. Since these tokens are only approximate, a verification step is performed. In this stage, the target model evaluates the speculative tokens $t+1 \colon t+n$ in parallel using a single forward pass. Each token is accepted with probability
$\min(1, p_{\texttt{tgt}}(x) / p_{\texttt{dft}}(x))$.
If none of the speculative tokens are accepted, the target model resamples according to the adjusted distribution
\begin{equation}
p_{\texttt{tgt}}'(x) = \mathrm{norm}\Big(\max(0,\, p_{\texttt{tgt}}(x) - p_{\texttt{dft}}(x))\Big)
\label{eq:residual} 
\end{equation}
ensuring correctness while still leveraging the draft model for potential speedup.

\noindent\textbf{Multi-candidate Tree-structured Verification.} To explore multiple potential future paths, recent schemes organize $N$ speculative tokens into a tree-structured proposal $\mathcal{T}$. The tree $\mathcal{T}$ is characterized by its depth $d$, which specifies the number of tokens predicted sequentially forward along each path, and its branching factor $b$, which denotes the number of parallel candidates proposed at each position.

Specifically, at any given node in the tree, the draft model proposes $b$ independent tokens $\{x_1, \dots, x_b\}$ for the same speculative position. To preserve the exactness of the target distribution, these $b$ candidates are verified sequentially. Let $p_{\texttt{tgt}}^{(1)}(x)$ be the initial target distribution for the first candidate $x_1$. For each $i \in \{1, \dots, b\}$, candidate $x_i$ is accepted with probability: $\min(1, p_{\texttt{tgt}}^{(i)}(x_i) / q_{\texttt{dft}}(x_i))$
where $q_{\texttt{dft}}$ is the draft distribution. If $x_i$ is rejected, the target distribution for the next candidate $x_{i+1}$ is updated to the residual distribution $p_{\texttt{tgt}}^{(i+1)}$:
\begin{equation}
p_{\texttt{tgt}}^{(i+1)}(x) = \frac{\max\left(0, p_{\texttt{tgt}}^{(i)}(x) - q_{\texttt{dft}}(x)\right)}{\sum_{x' \in \mathcal{X}} \max\left(0, p_{\texttt{tgt}}^{(i)}(x') - q_{\texttt{dft}}(x')\right)}.
\label{eq:recur}
\end{equation}
In a tree structure, this verification process is applied at each level. If a candidate $x_i$ is accepted, the process continues to its children in the next level (depth); if all $b$ candidates at a certain position are rejected, the branch is truncated, and a final token is resampled from the cumulative residual distribution $p_{\texttt{tgt}}^{(b+1)}(x)$. By using a tree-attention mask, the target model can verify all paths within $\mathcal{T}$ in a single forward pass, significantly increasing the potential number of accepted tokens per step.

\section{Limitation of Multi-candidate Speculative Decoding}
Multi-candidate speculative decoding aims to maximize accepted tokens by verifying a batch of $N$ candidates $\{\mathbf{x}_1, \dots, \mathbf{x}_N\}$ for one position. Although a broader search space should improve acceptance, its efficiency is throttled by Residual Drift. In standard verification (Equation~\ref{eq:recur}), candidates are checked sequentially to preserve exactness. Once $\mathbf{x}_1$ is rejected, $\mathbf{x}_2$ must be verified against the residual distribution $P_{\text{res}}(x) = \max(0, p(x) - q(x)) / \sum_{x'} \max(0, p(x') - q(x'))$. By construction, $P_{\text{res}}(x)$ assigns zero probability where $q(x)$ already meets or exceeds $p(x)$, concentrating mass in the draft model's ``blind spots.'' This creates a severe support mismatch: all $N$ candidates are sampled i.i.d. from the original draft distribution $q(x)$, while later verification criteria shift away from $q(x)$'s high-density regions.

\begin{figure}[t]
\centering
\centerline{\includegraphics[width=\linewidth]{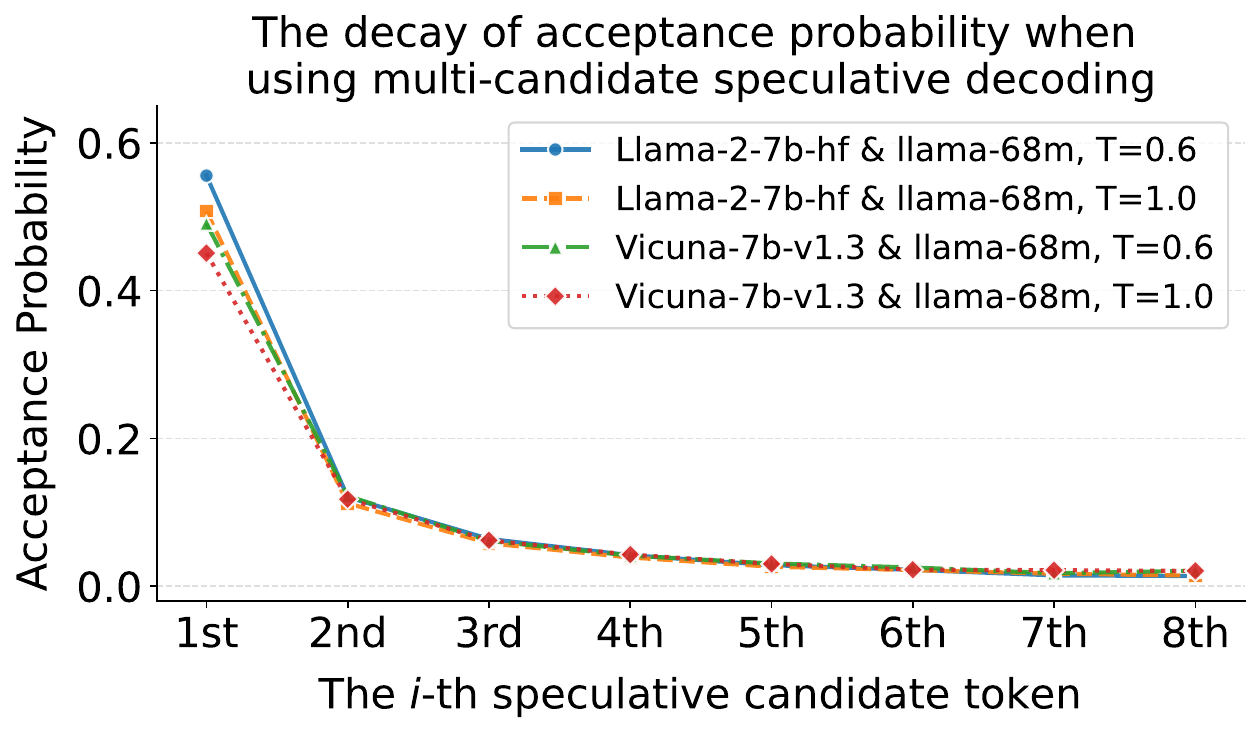}}
\caption{
\textbf{The Decay of Acceptance Probability.} Sequential candidate acceptance rates on CNN/DM using Llama-2-7B and Vicuna-7B~\cite{vicuna} as targets with Llama-68M as the draft. Across temperatures ($T \in \{0.6, 1.0\}$), acceptance drops sharply after the first candidate, leaving later candidates with near-zero marginal utility.
}
\label{candi_accept_rob}
\end{figure}

Our empirical results confirm this precipitous decline in marginal utility. As illustrated in Figure~\ref{candi_accept_rob}, the first candidate $\mathbf{x}_1$ maintains a meaningful acceptance rate, while the probability for subsequent candidates $\mathbf{x}_{i \geq 2}$ rapidly approaches zero. This pattern indicates a cascade of obsolescence: because candidates in the same batch are sampled from the high-density support of $q(x)$, an early rejection shifts the verification criteria toward a residual distribution where the remaining candidates have little statistical visibility. To quantify the resulting scaling inefficiency, we vary the number of candidates $N$ under a fixed speculation depth and report the efficiency score $\eta$, defined as the marginal EAL gain per additional candidate. Appendix Table~\ref{tab:scaling_inefficiency} shows that increasing $N$ yields only modest EAL improvements while $\eta$ drops sharply, indicating that wider speculation spends increasing parallel verification compute on candidates with diminishing acceptance utility.

These observations reveal a scaling wall: expanding tree width incurs linear verification cost but yields sub-linear acceptance gains because the residual distribution diverges from the draft distribution. This raises a central question: \textit{Can we break this bottleneck by mitigating Residual Drift?} If rejection residuals can remain aligned with high-density regions of $q(x)$, subsequent candidates may stay useful even after earlier failures. By re-aligning the verification criteria with the draft model's support, we can transform multi-candidate decoding from a game of diminishing returns into a scalable acceleration paradigm. Next, we present ResiSpec.




\begin{figure*}[ht]
\centering
\centerline{\includegraphics[width=\linewidth]{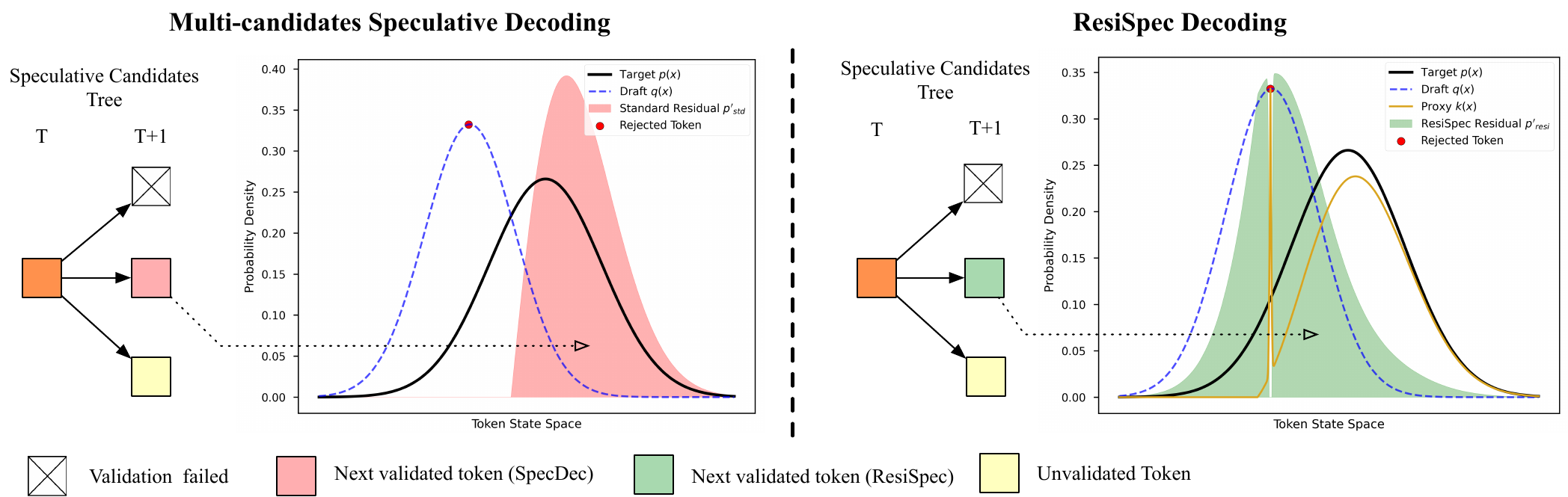}}
\caption{
Comparison of ResiSpec vs. standard multi-candidate verification. In traditional schemes, sequential rejections trigger Residual Drift, shifting the target distribution into regions mathematically incompatible with the draft model (red areas), causing candidates to become obsolete. ResiSpec mitigates this drift by employing an auxiliary proxy to reshape the residual distribution (green areas), re-aligning it with the draft model’s high-density support. This proximity ensures that pre-sampled candidates remain viable, effectively sustaining high acceptance rates across the entire verification batch.}
\label{method_overview}
\end{figure*}

\section{Method: Residual
Distribution Shaping}

In this section, we present ResiSpec, a framework that ensures high acceptance rates in multi-candidate speculative decoding by actively shaping the residual distribution. The innovation is the introduction of an auxiliary proxy distribution $k(x)$, which replaces the original draft distribution $q(x)$ during the verification step. By strategically constructing $k(x)$, we ensure that if a candidate is rejected, the resulting residual distribution $p'(x)$ remains proximal to the draft distribution $q(x)$, thereby maximizing the acceptance probability for all subsequent candidates in the batch.

%


\subsection{Problem Formulation}
In standard speculative decoding, a candidate $x$ is accepted with probability $\min(1, p(x)/q(x))$. If rejected, the algorithm samples from a residual distribution. To gain control over the failure state, we propose verifying candidates using a proxy distribution $k(x)$ in place of $q(x)$.

\begin{definition}[Proxy Mechanism]
\label{def:proxy_mech}
Given a target distribution $p(x)$ and an unnormalized residual mass $r(x)$, the proxy distribution $k(x)$ is defined as:
\begin{equation}
    k(x) = p(x) - r(x) + s(x)
\end{equation}
where $r(x)$ represents the mass reserved for the residual distribution $p'(x) = r(x)/Z$, and $s(x) \geq 0$ is a slack function. 
\end{definition}

The slack function $s(x)$ serves as a compensatory mass to restore normalization of $k(x)$. To prevent this injected mass from interfering with the reserved distribution, $s(x)$ is only permitted to be non-zero where the residual mass is absent ($r(x)=0$). This leads to the disjoint support constraint:
\begin{equation}
    \text{supp}(s) \cap \text{supp}(r) = \emptyset, \quad \text{i.e., } s(x) \cdot r(x) = 0, \forall x \in \mathcal{X}
\end{equation}

%

For the sampling process to remain exact (i.e., the marginal distribution is $p(x)$), we impose the constraints on $k(x)$:
\begin{enumerate}
    \item \textbf{Probability Constraint}: $\sum_x k(x) = 1$ and $k(x) \geq 0$.
    \item \textbf{Local Alignment}: At the sampled point $x_i$, $k(x_i) = q(x_i)$, ensuring consistency with the draft model's local probability.
\end{enumerate}

\begin{proposition}[Global Mass Identity]
\label{prop:mass_identity}
For the proxy $k(x)$ to be a valid probability distribution, the total mass of the slack function $s(x)$ must be exactly equal to the total reserved residual mass $Z = \sum_x r(x)$, provided $r(x) \leq p(x)$ for all $x \in \mathcal{X}$.
\end{proposition}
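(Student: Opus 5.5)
The plan is to sum the defining identity $k(x) = p(x) - r(x) + s(x)$ of Definition~\ref{def:proxy_mech} over $\mathcal{X}$ and use the normalization of $p$ and $k$. This shows that the mass identity $\sum_x s(x) = Z$ is equivalent to normalization. Nonnegativity is then handled separately, using the hypothesis $r \le p$.

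\emph{Step 1 (necessity).} Assume $k$ is a valid distribution, so $\sum_x k(x) = 1$. Summing the definition gives
\begin{equation*}
1 = \sum_x p(x) - \sum_x r(x) + \sum_x s(x) = 1 - Z + \sum_x s(x).
\end{equation*}
Here I used $\sum_x p(x)=1$, since $p$ is the target distribution, and $Z=\sum_x r(x)$. Since $\mathcal{X}$ is a finite vocabulary, every sum is finite and can be split termwise. Rearranging yields $\sum_x s(x) = Z$.

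\emph{Step 2 (sufficiency and the role of $r\le p$).} Conversely, suppose $\sum_x s(x) = Z$. The same computation gives $\sum_x k(x) = 1$, so it remains to check $k(x)\ge 0$ pointwise. The hypothesis $r(x)\le p(x)$ gives $p(x)-r(x)\ge 0$, and $s(x)\ge 0$ by definition. Therefore $k(x) = (p(x)-r(x)) + s(x) \ge 0$.

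The disjoint support constraint $s(x)\,r(x)=0$ is not needed for the identity itself. I would still remark that it is consistent with the identity: at points with $r(x)>0$ we get $k(x)=p(x)-r(x)$, and the slack mass $Z$ is placed entirely on $\{x : r(x)=0\}$.

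\emph{Main obstacle.} The only delicate point is interpretive rather than computational. The hypothesis $r\le p$ matters only for the nonnegativity half of the Probability Constraint; the mass equation follows from summation alone. I would state this explicitly, so the proposition reads as ``$k$ is a probability distribution if and only if $\sum_x s(x)=Z$, given $r\le p$ and $s\ge 0$.'' I would also note the one thing that could fail here: if $r(x)>p(x)$ at some point with $s(x)=0$, then $k(x)<0$. This is exactly why the hypothesis is required.
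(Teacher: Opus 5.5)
Your proposal is correct, and your Step 1 is exactly the paper's proof: sum $k(x)=p(x)-r(x)+s(x)$ over $\mathcal{X}$, use $\sum_x p(x)=1$ and $\sum_x r(x)=Z$, and rearrange to get $\sum_x s(x)=Z$. Your Step 2 (the converse, with $k\ge 0$ following from $r\le p$ and $s\ge 0$) goes beyond the paper, whose proof never uses the hypothesis $r\le p$, and so it helpfully explains why that hypothesis appears in the statement.
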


\begin{proof}
To satisfy the normalization axiom $\sum_x k(x) = 1$, we expand the definition:

\begin{equation}
\begin{aligned}
    \sum_x k(x) &= \sum_x (p(x) - r(x) + s(x)) = 1 \\
    &\Rightarrow \sum_x p(x) - \sum_x r(x) + \sum_x s(x) = 1 \\
    &\Rightarrow 1 - Z + \sum_x s(x) = 1 \Rightarrow \sum_x s(x) = Z.
\end{aligned}
\end{equation}
This identity reveals a fundamental conservation law in our framework: any probability mass ``reserved'' from the target to form the residual must be precisely balanced by the slack mass ``injected'' to construct the proxy.
\end{proof}

\begin{definition}[The Construction Objective]
\label{def:l1_obj}
Since subsequent candidates are pre-sampled from $q(x)$, the probability of future acceptance is maximized when the overlap between the residual $p'(x)$ and the draft $q(x)$ is maximized. This is equivalent to minimizing the $L_1$ distance:
\begin{equation}
\begin{split}
\min_{k} \sum_x |p'(x) - q(x)|, \quad \text{where } p'(x) = \frac{r(x)}{Z},\\
Z = \sum_x r(x).
\end{split}
\end{equation}
\end{definition}

\subsection{Theoretical Boundaries: Ideal Shaping and Limits}
To derive the optimal residual shaping strategy, we seek a formulation that satisfies the rigorous exactness constraints established in Definition~\ref{def:proxy_mech} while attaining the global minimum of the $L_1$ objective defined in Definition~\ref{def:l1_obj}.

\textbf{Ideal Alignment and the Capacity Bound.}
We begin by identifying the ideal shape of the residual mass $r(x)$ and then determine the maximum budget $Z$ that the system can sustain without violating probability axioms.

\begin{lemma}[Global Optimality of Residual Matching]
\label{lem:l1_matching}
To achieve the global minimum of the $L_1$ objective $\mathcal{J} = \sum_{x \in \mathcal{X}} |r(x)/Z - q(x)|$, the residual mass must satisfy the ideal scaling relationship:
\begin{equation}
\label{equ:ideal_r}
    r(x) = Z \cdot q(x), \quad \forall x \in \mathcal{X}
\end{equation}
\end{lemma}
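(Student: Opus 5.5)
The argument is a direct nonnegativity-and-attainment argument for the objective $\mathcal{J}$, with no need for the exactness constraints at this stage. The lemma concerns only the shape of $r$ at a fixed budget $Z>0$; the question of which $Z$ is admissible (the capacity bound under $r(x)\le p(x)$ and Proposition~\ref{prop:mass_identity}) is postponed. The plan is to treat $p'(x)=r(x)/Z$ as the decision variable, show that $\mathcal{J}\ge 0$ with equality exactly when $p'=q$, and then translate back to $r$.

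First, I observe that for any admissible nonnegative $r$ with $\sum_x r(x)=Z$, the normalized residual $p'=r/Z$ is a probability distribution on $\mathcal{X}$. Each term $|p'(x)-q(x)|$ is nonnegative, so $\mathcal{J}\ge 0$.

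Second, I show that $\mathcal{J}=0$ holds if and only if $p'(x)=q(x)$ for every $x$. A sum of nonnegative terms vanishes exactly when every term vanishes. Hence the global minimum value $0$ is attained precisely at $p'=q$, and the minimizer is unique. Multiplying by $Z$ gives $r(x)=Z\,q(x)$ for all $x\in\mathcal{X}$, which is \eqref{equ:ideal_r}.

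Third, I verify that this choice is consistent with the normalization used in the objective. Since $\sum_x Z q(x)=Z$, the budget identity $Z=\sum_x r(x)$ holds automatically, so the characterization is self-consistent and not circular.

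To make the uniqueness statement robust, I would also note the identity $\sum_x|p'-q| = 2\bigl(1-\sum_x\min(p'(x),q(x))\bigr)$. This ties the objective directly to the overlap and acceptance interpretation in Definition~\ref{def:l1_obj}: maximal overlap $\sum_x\min(p',q)=1$ forces $p'=q$.

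The main subtlety is not the algebra but scoping the claim. The lemma asserts optimality of the \emph{shape} of $r$ for whatever $Z>0$ is chosen. Whether $Z q(x)\le p(x)$ can hold everywhere, and hence how large $Z$ may be so that $k(x)=p(x)-r(x)+s(x)$ remains valid, is a feasibility question about the budget. I would flag it explicitly as deferred to the subsequent capacity bound, rather than attempt to resolve it here.
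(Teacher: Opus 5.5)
Your proposal is correct and follows essentially the same route as the paper: each term of $\mathcal{J}$ is nonnegative, the sum vanishes exactly when $r(x)/Z = q(x)$ for every $x$, and multiplying by $Z$ gives $r(x) = Z\,q(x)$. The overlap identity and your explicit deferral of feasibility are sound extras that the lemma does not need; the paper treats feasibility in Proposition~\ref{prop:validity}.
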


\begin{proof}
By the properties of the $L_1$ norm, $\mathcal{J} \geq 0$, with equality if and only if the normalized residual $p'(x) = r(x)/Z$ is identical to $q(x)$. This directly implies $r(x) = Z q(x)$.
\end{proof}

\begin{proposition}[The Capacity Bound of Ideal Shaping]
\label{prop:validity}
For the ideal residual $r(x) = Z q(x)$ to be part of a valid proxy distribution $k(x) = p(x) - r(x) + s(x)$ while adhering to the disjoint support constraint ($s \cdot r = 0$), the total reserved mass $Z$ is bounded by the minimum density ratio:
\begin{equation}
    Z \leq \min_{x: q(x)>0} \frac{p(x)}{q(x)}
\end{equation}
\end{proposition}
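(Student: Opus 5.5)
The plan is to derive the bound pointwise from nonnegativity of the proxy, $k(x)\ge 0$, on the support of $q$, and then take the tightest such constraint. Fix any $x$ with $q(x)>0$. Assuming $Z>0$ (the case $Z=0$ is trivial, since the right-hand side is nonnegative), the ideal residual of Lemma~\ref{lem:l1_matching} gives $r(x)=Zq(x)>0$. So $x\in\mathrm{supp}(r)$, and the disjoint support constraint $s(x)\,r(x)=0$ forces $s(x)=0$. The proxy at that point therefore reduces to
\begin{equation*}
k(x)=p(x)-Z\,q(x).
\end{equation*}
No slack mass can be added there to compensate a deficit.

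Next I invoke the Probability Constraint $k(x)\ge 0$, which yields $Z\,q(x)\le p(x)$. Dividing by $q(x)>0$ gives $Z\le p(x)/q(x)$. Equivalently, this is the hypothesis $r(x)\le p(x)$ already assumed in Proposition~\ref{prop:mass_identity}, specialized to $r=Zq$. Since this holds for every $x$ in the support of $q$, taking the minimum over that set yields the stated bound $Z\le \min_{x:q(x)>0} p(x)/q(x)$. Points with $q(x)=0$ impose no constraint: there $r(x)=0$, so $k(x)=p(x)+s(x)\ge 0$ automatically.

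For completeness I would also check that the constraint set is nonempty under the bound, i.e.\ that a valid slack exists. By Proposition~\ref{prop:mass_identity} we need $\sum_x s(x)=Z$ with $s$ supported on $\{q=0\}$. This is a genuine extra requirement: if $\mathrm{supp}(q)$ is the whole vocabulary, then $s\equiv 0$ and normalization forces $Z=0$. Hence the stated inequality is a necessary condition, which is what the proposition claims, rather than a sufficient one.

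The main obstacle is this interplay. The proof itself only needs the pointwise nonnegativity argument, and I would present it as a necessity result. I would remark that the slack-placement issue is what makes the ideal shaping generally unattainable, which motivates relaxing $r=Zq$ later.
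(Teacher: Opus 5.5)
Your proof is correct and follows the paper's argument essentially line for line. On $\{x: q(x)>0\}$ the ideal residual $r=Zq$ is positive, so the disjoint support constraint forces $s=0$ there; nonnegativity of $k$ then gives $Z\le p(x)/q(x)$, and minimizing over the support of $q$ yields the bound. Your extra remarks are accurate: you handle the trivial case $Z=0$ (which the paper glosses over), and your observation that full support of $q$ forces $Z=0$ via the mass identity matches the paper's later discussion of the Alignment-Identity Contradiction.
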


\begin{proof}
Substituting the ideal shape $r(x) = Z q(x)$ into the non-negativity constraint $k(x) \geq 0$ from Definition~\ref{def:proxy_mech}, we have:
\begin{equation}
    p(x) - Z q(x) + s(x) \geq 0
\end{equation}
For any $x$ where $q(x) > 0$, the ideal residual $r(x) = Z q(x)$ is strictly positive. According to the disjoint support constraint in Definition~\ref{def:proxy_mech}, we must have $s(x) = 0$ for these tokens. The inequality thus simplifies to:
\begin{equation}
    p(x) - Z q(x) \geq 0 \implies Z \leq \frac{p(x)}{q(x)}
\end{equation}
Taking the minimum over all $x$ in the support of $q$ ensures the proxy remains non-negative, yielding the stated bound.
\end{proof}

\textbf{The Degeneracy and Support Paradox.} 
In real-world LLM inference, the theoretical feasible region defined by $Z \leq \min_{x: q(x)>0} \frac{p(x)}{q(x)}$ is often degenerate or operationally unsustainable. This stems from two fundamental conflicts:

\textit{1. The Vanishing Margin}: Due to the ``long-tail''~\cite{long_tail} nature of language models, there frequently exist tokens where the target model $p(x) \to 0$ while the draft model $q(x) > 0$. In such cases, the upper bound $\min \frac{p(x)}{q(x)}$ collapses to near-zero, effectively shrinking the feasible region to $\{0\}$. A budget of $Z \approx 0$ would deplete the residual mass $r(x)$, stripping the system of the probability budget needed to support subsequent candidates.

\textit{2. The Alignment-Identity Contradiction}: To achieve the global minimum of the $L_1$ objective ($L_1=0$), we require $r(x) = Z \cdot q(x)$. If the draft model has full support ($q(x) > 0, \forall x$), then $r(x) > 0$ must also hold. Under the \textit{disjoint support constraint} ($s \cdot r = 0$), this forces the slack function $s(x)$ to be zero everywhere, which, by Proposition~\ref{prop:mass_identity}, implies $Z = \sum s(x) = 0$. 

Crucially, accepting the degenerate solution $Z=0$ causes the proxy distribution to revert to the target ($k(x) = p(x)$). This directly violates the Local Alignment constraint $k(x_i) = q(x_i)$, as the rejected candidate $x_i$ is typically a point of over-estimation where $q(x_i) > p(x_i)$. We therefore reformulate the optimal shaping of the residual as a constrained approximation problem, relaxing the ideal shape to maintain a functional residual budget.

\begin{table*}[htbp]
\centering
\footnotesize
\begin{tabular}{@{}llccrrrrrr@{}}
\toprule
\multirow{2}{*}{Target LLM} &
\multirow{2}{*}{Draft Model} &
\multirow{2}{*}{T} &
\multirow{2}{*}{Dataset} &
\multicolumn{2}{c}{ResiSpec} &
\multicolumn{2}{c}{SpecInfer} &
\multicolumn{2}{c}{ResiSpec vs SpecInfer} \\
\cmidrule(lr){5-6} \cmidrule(lr){7-8} \cmidrule(lr){9-10}
& & & &
Acc.↑& Thru.↑&
Acc.↑& Thru.↑&
Acc. Gain & Speedup \\
\midrule
Llama-2-7B & JF68M & 1 & CNN/DM & 4.68 & 96.43 & 2.51 & 57.29 & 1.86$\times$& 1.68$\times$\\
Llama-2-7B & JF68M & 1 & OpenWebText & 4.69 & 96.06 & 2.51 & 56.76 & 1.86$\times$& 1.69$\times$\\
Llama-2-7B & JF68M & 1 & C4 & 4.67 & 96.90 & 2.64 & 61.02 & 1.77$\times$& 1.59$\times$\\
Llama-2-7B & JF68M & 0.6 & CNN/DM & 4.14 & 85.03 & 2.59 & 59.41 & 1.60$\times$& 1.43$\times$\\
Llama-2-7B & JF68M & 0.6 & OpenWebText & 4.16 & 85.62 & 2.49 & 56.48 & 1.67$\times$& 1.52$\times$\\
Llama-2-7B & JF68M & 0.6 & C4 & 4.13 & 85.91 & 2.66 & 61.69 & 1.55$\times$& 1.39$\times$\\
\bottomrule
\end{tabular}
\caption{Performance comparison between ResiSpec and the baseline SpecInfer. Regarding the configuration of the tree, we
use a 3-branch complete tree with a depth of 5. The results in the table are averaged over 200 runs to reduce variance and ensure statistical reliability. Experimental results demonstrate that ResiSpec significantly increased the accepted length (by up to 1.86$\times$) and throughput (by up to 1.68$\times$), validating its effectiveness in accelerating large language model inference.}
\label{tab:results}
\end{table*}

\begin{table}[htbp]
\centering
\small
\begin{tabular}{@{}lcccc@{}}
\toprule
\textbf{Method} & \makecell{\textbf{Acc.}\\\textbf{Len↑}} & \makecell{\textbf{Thru.}\\\textbf{(tok/s)↑}} & \makecell{\textbf{Acc. Len}\\\textbf{Gain}} & \textbf{Speedup} \\
\midrule
Sequoia & 3.07 & 88.14 & -- & -- \\
Sequoia+ResiSpec & 4.32 & 115.32 & 1.41$\times$ & 1.31$\times$ \\
EAGLE & 3.12 & 61.82 & -- & -- \\
EAGLE+ResiSpec & 3.85 & 71.17 & 1.23$\times$ & 1.15$\times$ \\
\bottomrule
\end{tabular}
\caption{Compatibility of ResiSpec with existing multi-candidate speculative decoding methods at $T=1.0$. ResiSpec is evaluated as an add-on to Sequoia~\cite{sequoia} and EAGLE~\cite{eagle} under matched decoding settings, improving both metrics.}
\label{tab:framework_compatibility}
\end{table}

\subsection{Practical Implementation: The ResiSpec Algorithm}


Given that the theoretical ideal is unattainable, we propose a minimal-budget approximation strategy. According to Theorem~\ref{thm:residual_shaping_monotonicity}, the $L_1$ distance between the shaped residual $p'(x)$ and the draft distribution $q(x)$ is strictly monotonically increasing with respect to $Z$ whenever $Z > Z^*$. Therefore, to maintain the fidelity of the draft model's trajectory and maximize the acceptance utility for subsequent candidates, we should seek the smallest possible $Z$ that satisfies the physical constraints of the verification step.



Since $x_0$ is a point where the draft model over-estimates the target ($q(x_0) > p(x_0)$), and no residual mass can be reserved at a point where the target density is already exhausted (i.e., $r(x_0) = 0$), the definition of $k(x)$ at $x_0$ simplifies to $q(x_0) = p(x_0) + s(x_0)$. This necessitates a local mass injection of:
\begin{equation}
    s(x_0) = q(x_0) - p(x_0)
\end{equation}
Given the Global Mass Identity $Z = \sum_x s(x)$ and the non-negativity requirement $s(x) \geq 0$, the total budget $Z$ is strictly lower-bounded by this local discrepancy:
\begin{equation}
    Z = s(x_0) + \sum_{x \neq x_0} s(x) \geq |q(x_0) - p(x_0)|
\end{equation}
This bound represents the minimum ``price'' the system must pay in terms of residual mass to bridge the gap between the draft's prediction and the target's reality at the failure point.

\begin{algorithm}[H]
  \caption{ResiSpec Sampling and Verification \\ 
  \label{alg:resispec}
  \textnormal{\small (\textit{The \underline{underlined step} marks ResiSpec's proxy computation, distinguishing it from standard multi-candidate protocols.})}}
  \label{alg:main_algorithm}
  \begin{algorithmic}[1]
    \STATE {\bfseries Input:} Prefix $x_{<n}$, target model $\mathcal{P}$, draft model $\mathcal{Q}$, and candidates $k$.
    \STATE {\bfseries Output:} A verified token $x$ sampled via the ResiSpec protocol.
    \STATE {\bfseries Initialize} residual $R \leftarrow \mathcal{P}$, draft $D \leftarrow \mathcal{Q}$
    \FOR{$i=1 \to k$}
        \STATE Sample $x_i \sim D, r_i \sim \text{Uniform}(0,1)$
        \IF{$r_i < \frac{R(x_i)}{D(x_i)}$}
            \STATE {\bfseries Return} $x_i$ \mycomment{Accept $x_i$}
        \ELSE
            \LINECOMMENT{Construct Proxy Distribution}
            \STATE $\underline{K \leftarrow \text{GetProxy}(R, Q, x_i)}$ \mycomment{\textit{ResiSpec proxy step}}
            \LINECOMMENT{Use Proxy Distribution to Calculate Residual Distribution instead of Draft distribution}
             \STATE $R \leftarrow \text{norm}(\max(R-K, 0))$ 
        \ENDIF
    \ENDFOR
    \LINECOMMENT{Resampling From the Residual Distribution}
    \STATE {\bfseries Return} $x \sim R$ 
    \item[] \hrulefill
    \FUNCTION{$\textsc{GetProxy}(p, q, x_0)$}
        \STATE $Z \gets q(x_0) - p(x_0)$ \mycomment{Total residual budget}
        \STATE $r_i \gets \min(Z \cdot q_i, p_i)$ for all $i \neq x_0$, and $r_{x_0} \gets 0$ 
        \STATE $\Delta \gets Z - \sum_i r_i$ \mycomment{Mass lost due to clipping at $p_i$}
        \STATE Update $r$ by distributing $\Delta$ into the remaining capacity $(p-r)$ 
        \mycomment{Ensures $\sum r = Z$ and $r \le p$}
        \STATE $k \gets p - r + s$, then set $k(x_0) \gets q(x_0)$ \mycomment{Align with $q$ at $x_0$}
        \STATE {\bfseries Return} $k$ 
    \ENDFUNCTION
  \end{algorithmic}
\end{algorithm}


Guided by the monotonicity principle (Theorem~\ref{thm:residual_shaping_monotonicity})—which suggests that any $Z$ larger than necessary will further deviate the residual from the draft's manifold—we select the minimal $Z$ that accommodates this local gap. In practice, we employ the following heuristic:
\begin{equation}
    Z_{applied} = \max(Z^*, |q(x_0) - p(x_0)|)
\end{equation}
This adjustment ensures that $k(x)$ remains a valid probability distribution while preventing the residual mass from collapsing due to long-tail noise. The complete procedure for dynamically adjusting $r(x)$ and $Z$ to balance theoretical exactness and inference speed is detailed in Algorithm~\ref{alg:resispec}. 

In the event of a verification failure, $\textsc{GetProxy}$ intervenes to adjust the residual distribution $R$ prior to the next verification step. Its primary role is to prevent the target distribution from drifting into regions of low draft density. By aligning the mass of $R$ with the draft model $Q$, the system ensures that the remaining candidates $\{x_{i+1}, \dots, x_k\}$ encounter verification criteria compatible with the draft model’s knowledge. The function’s logic is as follows:

\textbf{Step 1: Budgeting and Initial Projection.} It first calculates the mass budget $Z = |q(x_0) - p(x_0)|$. It then performs an initial projection by setting $r_i = \min(Zq_i, p_i)$, effectively ``clipping'' the desired draft shape into the target model’s available capacity.

\textbf{Step 2: Capacity-Aware Redistribution.} Any mass deficit $\Delta$ caused by clipping is redistributed to tokens where $p_i > r_i$. This ``water-filling'' approach ensures that the total residual mass strictly meets the budget $Z$ without violating the physical upper bound $p(x)$.

\textbf{Step 3: Proxy Formulation.} Finally, it constructs $k = p - r$, with a local adjustment at $x_0$ to ensure $k(x_0) = q(x_0)$. This ensures that the acceptance ratio at the failed point remains mathematically consistent with the original draft.


\subsection{Theoretical Verification}
A prerequisite for any speculative sampling framework is the preservation of the target model's distribution, ensuring that acceleration is achieved without compromising output quality. We formally prove that ResiSpec satisfies this exactness property; specifically, our iterative residual-shaping process, mediated by the proxy distribution $k(x)$, is shown to be provably unbiased, maintaining a marginal distribution identical to the target model $p(x)$ across all verification steps. A rigorous theoretical justification is provided in Appendix~\ref{appendix:exactness}. This theoretical guarantee is further corroborated by empirical validation, where we demonstrate zero distributional shift compared to standard auto-regressive decoding.

\section{Experiment}
\subsection{Experiment Setup}
\textbf{Model and Baseline.}
We evaluate ResiSpec in comparison with SpecInfer~\cite{specinfer} using Llama-2-7B~\cite{llama2} as the target model and JackFram/Llama-68M (JF68M)~\cite{specinfer} as the draft model. To ensure evaluation coverage, we conduct experiments on three widely-used text corpora: CNN/DM~\cite{cnn}, OpenWebText~\cite{openwebtext}, and C4~\cite{c4}. All evaluations are performed under two temperature settings, $T=1.0$ and $T=0.6$, to assess performance across different sampling regimes.

\textbf{Metrics.}
We assess the efficiency of speculative decoding using two metrics. The first metric is \emph{Accepted Length} (Acc. Len), which measures the average number of tokens successfully accepted per speculative decoding step; higher values indicate more effective draft utilization. The second metric is \emph{Throughput} (Thru.), defined as the number of generated tokens per second, which directly reflects end-to-end inference efficiency. All experiments are conducted on a single NVIDIA RTX 3090 GPU and averaged over multiple runs.

\subsection{Main Results}

We perform an extensive evaluation comparing ResiSpec with SpecInfer, with detailed results presented in Table~\ref{tab:results}. Considering that both ResiSpec and SpecInfer employ a tree-based speculative decoding method, we use a 3-branch complete tree with a depth of 5 for all experiments. The findings demonstrate that ResiSpec achieves statistically significant improvements over the baseline method. Specifically, we observe consistent enhancements in accepted length (up to \textbf{1.86$\times$}) and throughput (up to \textbf{1.68$\times$}) across all tested configurations. Notably, these performance gains are sustained across diverse data domains and varying sampling temperatures, highlighting the robustness and generalizability of our proposed distribution shaping mechanism.

\textbf{Compatibility with Existing Methods.}
To examine whether residual shaping is tied to a specific baseline, we integrate ResiSpec with two different multi-candidate speculative decoding methods. EAGLE improves the drafter side by accelerating candidate generation, while Sequoia optimizes the verification tree used by the target model. As shown in Table~\ref{tab:framework_compatibility}, ResiSpec is evaluated as an add-on, reporting accepted-length and throughput improvements under matched decoding settings.

\textbf{Computational Overhead.}
We further profile the runtime of ResiSpec's proxy construction and residual update relative to a single target-model verification step. Most of this overhead comes from memory-bound GPU operations over token distributions, rather than heavy arithmetic; fused kernels could therefore further reduce ResiSpec's latency. Importantly, ResiSpec does not introduce an additional model forward pass, and its cost comes from distribution-level operations after verification. Figure~\ref{fig:overhead_breakdown} reports per-step latency broken into drafting, verification, and ResiSpec shaping, and further decomposes ResiSpec's time into budgeting, redistribution, and proxy formulation.

\begin{figure}[t]
\centering
\includegraphics[width=\linewidth]{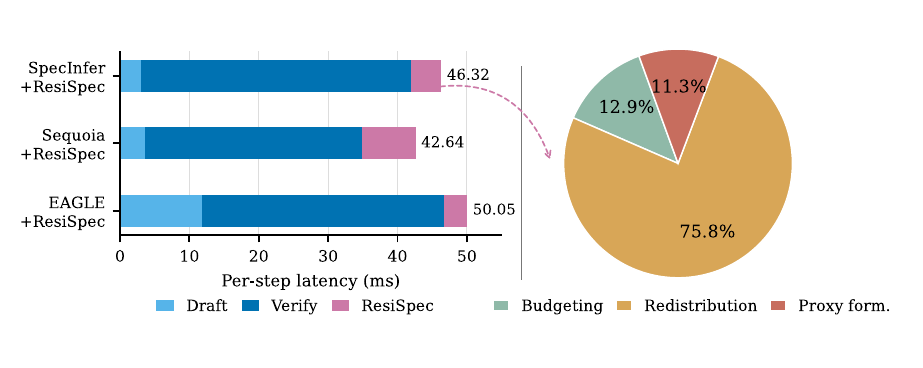}
\caption{Runtime analysis of ResiSpec. Left: per-step latency across SpecInfer, Sequoia, and EAGLE when augmented with ResiSpec, decomposed into drafting, target verification, and ResiSpec shaping. Right: ResiSpec's internal runtime breakdown across budgeting, redistribution, and proxy formulation, shown by percentage.}
\label{fig:overhead_breakdown}
\end{figure}

\subsection{Ablation Study}

To further dissect the efficacy of ResiSpec, we analyze both tree topology and target-draft model pairing. For topology, we compare tree structures under an iso-compute constraint by selecting configurations with similar verification costs, ensuring that performance differences reflect distribution alignment rather than extra parallel computation. Table~\ref{tab:ablation_results_1} shows that ResiSpec consistently outperforms SpecInfer, with larger gains on deeper and narrower trees, indicating stronger mitigation of accumulated Residual Drift over longer speculation horizons. We also evaluate additional target-draft model pairs and datasets in Appendix~\ref{appendix:additional_ablation}, where ResiSpec consistently improves accepted length and throughput over SpecInfer, suggesting that residual shaping is not tied to a single model family or capacity gap.

\begin{table}[htbp]
\centering
\small
\begin{tabular}{l c c c c} 
\toprule
\multirow{2}{*}{Tree} &
\multirow{2}{*}{\makecell{Verification\\Costs}} &
\multicolumn{1}{c}{ResiSpec} &
\multicolumn{1}{c}{SpecInfer} &
\multirow{2}{*}{SpeedUp} \\ 
\cmidrule(lr){3-3} \cmidrule(lr){4-4}
& & Thru.↑ & Thru.↑ & \\
\midrule
d7b2 & 127 & 101.01 & 52.69 & 1.92$\times$ \\
d5b3 & 121 &94.25  & 56.92 & 1.66$\times$ \\
d4b5 & 156 &60.83  & 42.46 & 1.43$\times$ \\
d3b11 & 133 & 44.21  & 37.89 & 1.17$\times$ \\
\bottomrule
\end{tabular}
\caption{Impact of tree topologies under comparable verification budgets on Llama-2-7B with JF68M. Verification cost is the number of candidate positions verified concurrently. ResiSpec gains more over SpecInfer on deeper, narrower trees.}
\label{tab:ablation_results_1}
\end{table}

\subsection{Empirical Validation of Distributional Exactness}

To empirically verify the theoretical exactness proven in Theorem~\ref{thm:exactness} in Appendix~\ref{appendix:exactness}, we measure the Kullback-Leibler (KL) divergence~\cite{kldiv} between the original target distribution $p(x)$ and the effective marginal distribution produced by ResiSpec. The effective distribution is reconstructed by exhaustively marginalizing the probability mass across all possible verification trajectories within a candidate batch, including all sequential acceptance and rejection paths (see Appendix~\ref{appendix:calculation_empirical} for the full derivation). 

As shown in Table~\ref{tab:KL Divergence}, we compare ResiSpec against standard multi-candidate speculative decoding. Our results demonstrate that ResiSpec maintains the target distribution with high fidelity. Specifically, the KL divergence between the ResiSpec-generated distribution and the target model remains on the order of $10^{-10}$, a level of discrepancy that is indistinguishable from standard multi-candidate methods. This negligible error is consistent with numerical floating-point noise rather than any systematic algorithmic bias. These findings confirm that ResiSpec is stochastically equivalent to the target model, ensuring that our distribution shaping does not sacrifice generation quality for speed.

\begin{table}[htbp]
\centering
\small

\begin{tabular}{@{}llcc@{}}
\toprule
\textbf{Method} & \textbf{Data} & \textbf{KL ($T=1.0$)} & \textbf{KL ($T=0.6$)} \\
\midrule
\multirow{3}{*}{ResiSpec} 
    & CNN/DM    & 2.64e-10 & 4.38e-10 \\
    & OpenWebText & 7.15e-10 & 4.37e-10 \\
    & C4          & 1.02e-9  & 1.39e-10 \\
\midrule
\multirow{3}{*}{SpecInfer} 
    & CNN/DM    & 1.04e-9  & 3.55e-10 \\
    & OpenWebText & 1.63e-9  & 3.89e-10 \\
    & C4          & 9.10e-10 & 1.09e-10 \\
\bottomrule
\end{tabular}
\caption{KL divergence between the target distribution $p(x)$ and the sampling distributions of ResiSpec and SpecInfer on Llama-2-7B with JF68M. Near-zero values across datasets and temperatures confirm distributional preservation.}
\label{tab:KL Divergence}
\end{table}

\section{Conclusion}
We address the ``residual drift'' bottleneck in multi-candidate speculative decoding by proposing ResiSpec, a distribution-shaping framework. By aligning rejection residuals with the draft model through a proxy distribution, ResiSpec prevents the collapse of acceptance rates during sequential verification. As a verification-time add-on, ResiSpec can also complement existing candidate-generation and tree-construction methods. Our method maintains exact sampling while achieving up to 1.92$\times$ speedup, providing a robust and efficient solution for high-throughput LLM acceleration.


\nocite{spec_recyc}
\nocite{adaptive_l_sd}
\nocite{rest}
\nocite{control}
\nocite{old_speculative}
\nocite{tinyllama}
\nocite{rejection_sampling}
\nocite{prml}


\bibliography{aaai2027}

\begin{thebibliography}{41}
\providecommand{\natexlab}[1]{#1}

\bibitem[{Achiam et~al.(2023)Achiam, Adler, Agarwal, Ahmad, Akkaya, Aleman,
  Almeida, Altenschmidt, Altman, Anadkat et~al.}]{gpt4}
Achiam, J.; Adler, S.; Agarwal, S.; Ahmad, L.; Akkaya, I.; Aleman, F.~L.;
  Almeida, D.; Altenschmidt, J.; Altman, S.; Anadkat, S.; et~al. 2023.
\newblock Gpt-4 technical report.
\newblock \emph{arXiv preprint arXiv:2303.08774}.

\bibitem[{Ankner et~al.(2024)Ankner, Parthasarathy, Nrusimha, Rinard,
  Ragan-Kelley, and Brandon}]{hydra}
Ankner, Z.; Parthasarathy, R.; Nrusimha, A.; Rinard, C.; Ragan-Kelley, J.; and
  Brandon, W. 2024.
\newblock Hydra: Sequentially-dependent draft heads for medusa decoding.
\newblock \emph{arXiv preprint arXiv:2402.05109}.

\bibitem[{Baevski and Auli(2018)}]{long_tail}
Baevski, A.; and Auli, M. 2018.
\newblock Adaptive input representations for neural language modeling.
\newblock \emph{arXiv preprint arXiv:1809.10853}.

\bibitem[{Bapna, Arivazhagan, and Firat(2020)}]{control}
Bapna, A.; Arivazhagan, N.; and Firat, O. 2020.
\newblock Controlling computation versus quality for neural sequence models.
\newblock \emph{arXiv preprint arXiv:2002.07106}.

\bibitem[{Bishop and Nasrabadi(2006)}]{prml}
Bishop, C.~M.; and Nasrabadi, N.~M. 2006.
\newblock \emph{Pattern recognition and machine learning}.
\newblock Springer.

\bibitem[{Burton(2012)}]{old_speculative}
Burton, F.~W. 2012.
\newblock Speculative computation, parallelism, and functional programming.
\newblock \emph{IEEE Transactions on Computers}, 100(12): 1190--1193.

\bibitem[{Cai et~al.(2024)Cai, Li, Geng, Peng, Lee, Chen, and Dao}]{medusa}
Cai, T.; Li, Y.; Geng, Z.; Peng, H.; Lee, J.~D.; Chen, D.; and Dao, T. 2024.
\newblock Medusa: Simple llm inference acceleration framework with multiple
  decoding heads.
\newblock \emph{arXiv preprint arXiv:2401.10774}.

\bibitem[{Chen et~al.(2023)Chen, Borgeaud, Irving, Lespiau, Sifre, and
  Jumper}]{specd_2}
Chen, C.; Borgeaud, S.; Irving, G.; Lespiau, J.-B.; Sifre, L.; and Jumper, J.
  2023.
\newblock Accelerating large language model decoding with speculative sampling.
\newblock \emph{arXiv preprint arXiv:2302.01318}.

\bibitem[{Chen et~al.(2024)Chen, May, Svirschevski, Huang, Ryabinin, Jia, and
  Chen}]{sequoia}
Chen, Z.; May, A.; Svirschevski, R.; Huang, Y.; Ryabinin, M.; Jia, Z.; and
  Chen, B. 2024.
\newblock Sequoia: Scalable, robust, and hardware-aware speculative decoding.
\newblock \emph{arXiv preprint arXiv:2402.12374}.

\bibitem[{Dao et~al.(2022)Dao, Fu, Ermon, Rudra, and R{\'e}}]{flash_attention}
Dao, T.; Fu, D.; Ermon, S.; Rudra, A.; and R{\'e}, C. 2022.
\newblock Flashattention: Fast and memory-efficient exact attention with
  io-awareness.
\newblock \emph{Advances in neural information processing systems}, 35:
  16344--16359.

\bibitem[{Fu et~al.(2024)Fu, Bailis, Stoica, and Zhang}]{lookahead}
Fu, Y.; Bailis, P.; Stoica, I.; and Zhang, H. 2024.
\newblock Break the sequential dependency of llm inference using lookahead
  decoding.
\newblock \emph{arXiv preprint arXiv:2402.02057}.

\bibitem[{Gokaslan and Cohen(2019)}]{openwebtext}
Gokaslan, A.; and Cohen, V. 2019.
\newblock OpenWebText Corpus.

\bibitem[{He et~al.(2024)He, Zhong, Cai, Lee, and He}]{rest}
He, Z.; Zhong, Z.; Cai, T.; Lee, J.; and He, D. 2024.
\newblock Rest: Retrieval-based speculative decoding.
\newblock In \emph{Proceedings of the 2024 conference of the North American
  chapter of the association for computational linguistics: Human language
  technologies (volume 1: long papers)}, 1582--1595.

\bibitem[{Hu et~al.(2025{\natexlab{a}})Hu, Liu, Dong, Peng, McDanel, and
  Zhang}]{survey_2025}
Hu, Y.; Liu, Z.; Dong, Z.; Peng, T.; McDanel, B.; and Zhang, S.~Q.
  2025{\natexlab{a}}.
\newblock Speculative decoding and beyond: An in-depth survey of techniques.
\newblock \emph{arXiv preprint arXiv:2502.19732}.

\bibitem[{Hu et~al.(2025{\natexlab{b}})Hu, Zheng, Viswanathan, Chen, Rossi, Wu,
  Manocha, and Huang}]{optimal_2025}
Hu, Z.; Zheng, T.; Viswanathan, V.; Chen, Z.; Rossi, R.~A.; Wu, Y.; Manocha,
  D.; and Huang, H. 2025{\natexlab{b}}.
\newblock Towards optimal multi-draft speculative decoding.
\newblock \emph{arXiv preprint arXiv:2502.18779}.

\bibitem[{Huang, Guo, and Wang(2024)}]{adaptive_l_sd}
Huang, K.; Guo, X.; and Wang, M. 2024.
\newblock Specdec++: Boosting speculative decoding via adaptive candidate
  lengths.
\newblock \emph{arXiv preprint arXiv:2405.19715}.

\bibitem[{Kullback and Leibler(1951)}]{kldiv}
Kullback, S.; and Leibler, R.~A. 1951.
\newblock On information and sufficiency.
\newblock \emph{The annals of mathematical statistics}, 22(1): 79--86.

\bibitem[{Leviathan, Kalman, and Matias(2023)}]{specud}
Leviathan, Y.; Kalman, M.; and Matias, Y. 2023.
\newblock Fast inference from transformers via speculative decoding.
\newblock In \emph{Proceedings of the 40th International Conference on Machine
  Learning}, ICML'23. JMLR.org.

\bibitem[{Li et~al.(2024{\natexlab{a}})Li, Wei, Zhang, and Zhang}]{eagle_2}
Li, Y.; Wei, F.; Zhang, C.; and Zhang, H. 2024{\natexlab{a}}.
\newblock Eagle-2: Faster inference of language models with dynamic draft
  trees.
\newblock \emph{arXiv preprint arXiv:2406.16858}.

\bibitem[{Li et~al.(2024{\natexlab{b}})Li, Wei, Zhang, and Zhang}]{eagle}
Li, Y.; Wei, F.; Zhang, C.; and Zhang, H. 2024{\natexlab{b}}.
\newblock EAGLE: speculative sampling requires rethinking feature uncertainty.
\newblock In \emph{Proceedings of the 41st International Conference on Machine
  Learning}, ICML'24. JMLR.org.

\bibitem[{Luo et~al.(2025)Luo, Wang, Zhu, Zhang, Zhang, Yang, and
  Xu}]{spec_recyc}
Luo, X.; Wang, Y.; Zhu, Q.; Zhang, Z.; Zhang, X.; Yang, Q.; and Xu, D. 2025.
\newblock Turning trash into treasure: Accelerating inference of large language
  models with token recycling.
\newblock In \emph{Proceedings of the 63rd Annual Meeting of the Association
  for Computational Linguistics (Volume 1: Long Papers)}, 6816--6831.

\bibitem[{Mann et~al.(2020)Mann, Ryder, Subbiah, Kaplan, Dhariwal, Neelakantan,
  Shyam, Sastry, Askell, Agarwal et~al.}]{gpt3}
Mann, B.; Ryder, N.; Subbiah, M.; Kaplan, J.; Dhariwal, P.; Neelakantan, A.;
  Shyam, P.; Sastry, G.; Askell, A.; Agarwal, S.; et~al. 2020.
\newblock Language models are few-shot learners.
\newblock \emph{arXiv preprint arXiv:2005.14165}, 1(3): 3.

\bibitem[{Miao et~al.(2024)Miao, Oliaro, Zhang, Cheng, Wang, Zhang, Wong, Zhu,
  Yang, Shi, Shi, Chen, Arfeen, Abhyankar, and Jia}]{specinfer}
Miao, X.; Oliaro, G.; Zhang, Z.; Cheng, X.; Wang, Z.; Zhang, Z.; Wong, R.
  Y.~Y.; Zhu, A.; Yang, L.; Shi, X.; Shi, C.; Chen, Z.; Arfeen, D.; Abhyankar,
  R.; and Jia, Z. 2024.
\newblock SpecInfer: Accelerating Large Language Model Serving with Tree-based
  Speculative Inference and Verification.
\newblock In \emph{Proceedings of the 29th ACM International Conference on
  Architectural Support for Programming Languages and Operating Systems, Volume
  3}, ASPLOS ’24, 932–949. ACM.

\bibitem[{Narayanan et~al.(2021)Narayanan, Shoeybi, Casper, LeGresley, Patwary,
  Korthikanti, Vainbrand, Kashinkunti, Bernauer, Catanzaro, Phanishayee, and
  Zaharia}]{megatron_LM}
Narayanan, D.; Shoeybi, M.; Casper, J.; LeGresley, P.; Patwary, M.;
  Korthikanti, V.~A.; Vainbrand, D.; Kashinkunti, P.; Bernauer, J.; Catanzaro,
  B.; Phanishayee, A.; and Zaharia, M. 2021.
\newblock Efficient Large-Scale Language Model Training on GPU Clusters Using
  Megatron-LM.
\newblock arXiv:2104.04473.

\bibitem[{Peng et~al.(2023)Peng, Li, He, Galley, and Gao}]{vicuna}
Peng, B.; Li, C.; He, P.; Galley, M.; and Gao, J. 2023.
\newblock Instruction tuning with gpt-4.
\newblock \emph{arXiv preprint arXiv:2304.03277}.

\bibitem[{Pope et~al.(2023)Pope, Douglas, Chowdhery, Devlin, Bradbury, Heek,
  Xiao, Agrawal, and Dean}]{efficiency_motivation}
Pope, R.; Douglas, S.; Chowdhery, A.; Devlin, J.; Bradbury, J.; Heek, J.; Xiao,
  K.; Agrawal, S.; and Dean, J. 2023.
\newblock Efficiently scaling transformer inference.
\newblock \emph{Proceedings of machine learning and systems}, 5: 606--624.

\bibitem[{Pope et~al.(2022)Pope, Douglas, Chowdhery, Devlin, Bradbury,
  Levskaya, Heek, Xiao, Agrawal, and Dean}]{efficiently_transformer}
Pope, R.; Douglas, S.; Chowdhery, A.; Devlin, J.; Bradbury, J.; Levskaya, A.;
  Heek, J.; Xiao, K.; Agrawal, S.; and Dean, J. 2022.
\newblock Efficiently Scaling Transformer Inference.
\newblock arXiv:2211.05102.

\bibitem[{Radford et~al.(2019)Radford, Wu, Child, Luan, Amodei, Sutskever
  et~al.}]{gpt2}
Radford, A.; Wu, J.; Child, R.; Luan, D.; Amodei, D.; Sutskever, I.; et~al.
  2019.
\newblock Language models are unsupervised multitask learners.
\newblock \emph{OpenAI blog}, 1(8): 9.

\bibitem[{Raffel et~al.(2020)Raffel, Shazeer, Roberts, Lee, Narang, Matena,
  Zhou, Li, and Liu}]{c4}
Raffel, C.; Shazeer, N.; Roberts, A.; Lee, K.; Narang, S.; Matena, M.; Zhou,
  Y.; Li, W.; and Liu, P.~J. 2020.
\newblock Exploring the Limits of Transfer Learning with a Unified Text-to-Text
  Transformer.
\newblock \emph{Journal of Machine Learning Research}, 21(140): 1--67.

\bibitem[{Robert, Casella, and Casella(1999)}]{rejection_sampling}
Robert, C.~P.; Casella, G.; and Casella, G. 1999.
\newblock \emph{Monte Carlo statistical methods}, volume~2.
\newblock Springer.

\bibitem[{See, Liu, and Manning(2017)}]{cnn}
See, A.; Liu, P.~J.; and Manning, C.~D. 2017.
\newblock Get To The Point: Summarization with Pointer-Generator Networks.
\newblock In \emph{Proceedings of the 55th Annual Meeting of the Association
  for Computational Linguistics}, 1073--1083.

\bibitem[{Spector and Re(2023)}]{staged}
Spector, B.; and Re, C. 2023.
\newblock Accelerating llm inference with staged speculative decoding.
\newblock \emph{arXiv preprint arXiv:2308.04623}.

\bibitem[{Touvron et~al.(2023{\natexlab{a}})Touvron, Lavril, Izacard, Martinet,
  Lachaux, Lacroix, Rozi{\`e}re, Goyal, Hambro, Azhar et~al.}]{llama}
Touvron, H.; Lavril, T.; Izacard, G.; Martinet, X.; Lachaux, M.-A.; Lacroix,
  T.; Rozi{\`e}re, B.; Goyal, N.; Hambro, E.; Azhar, F.; et~al.
  2023{\natexlab{a}}.
\newblock LLaMA: open and efficient foundation language models. arXiv.
\newblock \emph{arXiv preprint arXiv:2302.13971}.

\bibitem[{Touvron et~al.(2023{\natexlab{b}})Touvron, Martin, Stone, Albert,
  Almahairi, Babaei, Bashlykov, Batra, Bhargava, Bhosale et~al.}]{llama2}
Touvron, H.; Martin, L.; Stone, K.; Albert, P.; Almahairi, A.; Babaei, Y.;
  Bashlykov, N.; Batra, S.; Bhargava, P.; Bhosale, S.; et~al.
  2023{\natexlab{b}}.
\newblock Llama 2: Open foundation and fine-tuned chat models.
\newblock \emph{arXiv preprint arXiv:2307.09288}.

\bibitem[{Vaswani et~al.(2017)Vaswani, Shazeer, Parmar, Uszkoreit, Jones,
  Gomez, Kaiser, and Polosukhin}]{all_you_need}
Vaswani, A.; Shazeer, N.; Parmar, N.; Uszkoreit, J.; Jones, L.; Gomez, A.~N.;
  Kaiser, {\L}.; and Polosukhin, I. 2017.
\newblock Attention is all you need.
\newblock \emph{Advances in neural information processing systems}, 30.

\bibitem[{Xia et~al.(2024)Xia, Yang, Dong, Wang, Li, Ge, Liu, Li, and
  Sui}]{specbench}
Xia, H.; Yang, Z.; Dong, Q.; Wang, P.; Li, Y.; Ge, T.; Liu, T.; Li, W.; and
  Sui, Z. 2024.
\newblock Unlocking efficiency in large language model inference: A
  comprehensive survey of speculative decoding.
\newblock \emph{arXiv preprint arXiv:2401.07851}.

\bibitem[{Xia et~al.(2023)Xia, Gao, Zeng, and Chen}]{sheared_llama}
Xia, M.; Gao, T.; Zeng, Z.; and Chen, D. 2023.
\newblock Sheared llama: Accelerating language model pre-training via
  structured pruning.
\newblock \emph{arXiv preprint arXiv:2310.06694}.

\bibitem[{Yin et~al.(2024)Yin, Chen, Huang, and Wang}]{2024_nips_theoretical}
Yin, M.; Chen, M.; Huang, K.; and Wang, M. 2024.
\newblock A theoretical perspective for speculative decoding algorithm.
\newblock \emph{Advances in Neural Information Processing Systems}, 37:
  128082--128117.

\bibitem[{Yu et~al.(2022)Yu, Jeong, Kim, Kim, and Chun}]{orca}
Yu, G.-I.; Jeong, J.~S.; Kim, G.-W.; Kim, S.; and Chun, B.-G. 2022.
\newblock Orca: A distributed serving system for $\{$Transformer-Based$\}$
  generative models.
\newblock In \emph{16th USENIX Symposium on Operating Systems Design and
  Implementation (OSDI 22)}, 521--538.

\bibitem[{Zhang, Zhao, and Chen(2024)}]{koala}
Zhang, K.; Zhao, J.; and Chen, R. 2024.
\newblock KOALA: Enhancing Speculative Decoding for LLM via Multi-Layer Draft
  Heads with Adversarial Learning.
\newblock \emph{arXiv preprint arXiv:2408.08146}.

\bibitem[{Zhang et~al.(2024)Zhang, Zeng, Wang, and Lu}]{tinyllama}
Zhang, P.; Zeng, G.; Wang, T.; and Lu, W. 2024.
\newblock Tinyllama: An open-source small language model.
\newblock \emph{arXiv preprint arXiv:2401.02385}.

\end{thebibliography}

\onecolumn
\appendix
\setcounter{secnumdepth}{2}

\section{Theoretical Proof for ResiSpec Exactness}

\subsection{Correctness and Exactness}
\label{appendix:exactness}

\begin{theorem}[Universal Exactness of ResiSpec]
\label{thm:exactness}
Let $p(x)$ be the target distribution and $q(x)$ be the draft distribution. For a candidate token $x \sim q(x)$, let $k(x)$ be a proxy distribution that satisfies the \textbf{Local Alignment} condition: $k(x) = q(x)$ at the specific value of the sampled candidate. If we define the acceptance probability as:
\begin{equation}
    \alpha(x) = \min\left(1, \frac{p(x)}{k(x)}\right)
\end{equation}
and the residual distribution as:
\begin{equation}
    p_{\text{res}}(x) = \frac{p(x) - \min(p(x), k(x))}{1 - \beta}
\end{equation}
where $\beta = \sum_{x \in \mathcal{X}} \min(p(x), k(x))$, then the marginal distribution of the output token $X$ is exactly $p(x)$.
\end{theorem}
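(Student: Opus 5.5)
The plan is to reduce the statement to the classical single-step argument for speculative sampling (Leviathan et al.), with $k$ playing the role of the proposal. I would split the output event $\{X=y\}$ into an \emph{accept} branch and a \emph{reject-then-resample} branch:
\begin{equation*}
\Pr[X=y] \;=\; \Pr[\text{draw } y,\ \text{accept}] \;+\; \Pr[\text{reject}]\cdot p_{\text{res}}(y).
\end{equation*}

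\textbf{Accept branch.} The candidate $y$ is drawn from $q$, and by Local Alignment the proxy used at that point satisfies $k(y)=q(y)$. Hence
\begin{equation*}
\Pr[\text{draw } y,\ \text{accept}] \;=\; q(y)\,\min\!\left(1,\tfrac{p(y)}{k(y)}\right) \;=\; k(y)\min\!\left(1,\tfrac{p(y)}{k(y)}\right) \;=\; \min(p(y),k(y)).
\end{equation*}
This is the key use of Local Alignment. Locally, a draw from $q$ is indistinguishable from a draw from $k$, so the accepted mass is $\min(p,k)$ pointwise. I would also record that $k$ is a valid distribution: Definition~\ref{def:proxy_mech} together with the Global Mass Identity (Proposition~\ref{prop:mass_identity}) gives $\sum_x k(x)=1$ and $k\ge 0$.

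\textbf{Reject branch.} Summing the accept branch over $y$ gives total acceptance probability $\beta=\sum_x\min(p(x),k(x))$, so the rejection probability is $1-\beta$. Multiplying by $p_{\text{res}}(y)$ cancels the normalizer and leaves $p(y)-\min(p(y),k(y))$. Adding the two branches yields exactly $p(y)$. As a sanity check, I would verify that $p_{\text{res}}$ is a genuine distribution: its numerator is nonnegative and sums to $1-\beta$. The degenerate case $\beta=1$ I would handle separately, since then rejection has probability zero.

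\textbf{Main obstacle.} The difficulty is that, in Algorithm~\ref{alg:resispec}, $k$ is built by $\textsc{GetProxy}$ \emph{after} seeing the rejected candidate, so $k=k_{x}$ depends on the draw. The argument above needs the actual rejection mass $\sum_x q(x)\bigl(1-\alpha_x(x)\bigr)$, together with the residual it induces, to match $1-\beta$ and $p-\min(p,k)$.

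I would first try to show that the family $\{k_x\}$ agrees with a single fixed distribution $k$ wherever it matters for acceptance. Concretely, I would argue that on the acceptance event only the value $k_x(x)=q(x)$ enters. On the rejection event, I would argue that the reserved mass $r$ produced by $\textsc{GetProxy}$ has total $Z=\sum_x s(x)$ and satisfies $r\le p$, by the clipping and water-filling steps. Then $p-\min(p,k)=r$, so the resampling distribution is $r/Z$, and the mass balance $1-\beta=Z$ follows from Proposition~\ref{prop:mass_identity}.

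If this fixed-$k$ identification fails in general, my fallback is to interpret the theorem conditionally on the proxy, i.e.\ treating $k$ as a fixed distribution consistent with $q$ at the candidate, and prove exactness in that conditional sense.

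\textbf{Multi-candidate chain.} Finally, I would extend to the full sequential verification by induction on the candidate index $i$. After a rejection, $p_{\text{res}}$ becomes the new target $p^{(i+1)}$. The single-step result, applied to $(p^{(i+1)},q,k^{(i+1)})$, shows that the conditional output law given rejection at step $i$ is $p^{(i+1)}$. Unrolling the chain down to the final resampling step gives marginal $p$.
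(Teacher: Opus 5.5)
Your accept/reject computation is exactly the paper's Steps 1--3. The obstacle you flag is where that argument breaks, both in your version and in the paper's: Step 1 there substitutes $k(x')=q(x')$ for every $x'$, and Step 2 then uses a single $\beta$. Your proposed repair cannot work.

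With draw-dependent proxies $k_{x_0}$, the accept branch contributes $q(y)\min(1,p(y)/k_y(y))=\min(p(y),q(y))$. So the only fixed $k$ compatible with acceptance is $q$ itself: a fixed $k$ with $k(x)=q(x)$ at every drawable candidate equals $q$. For the \textsc{GetProxy} proxies, the reject branch contributes $\sum_{x_0}\max(0,q(x_0)-p(x_0))\,p_{\mathrm{res},x_0}(y)=\sum_{x_0} r_{x_0}(y)$. Your per-candidate mass balance $1-\beta_{x_0}=Z_{x_0}=q(x_0)-p(x_0)$ is correct, but it only fixes total mass. The telescoping $\min(p,k)+(p-\min(p,k))=p$ requires the same $k$ in both branches, and here the branches use different proxies.

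For one candidate followed by resampling, exactness is in fact equivalent to $\sum_{x_0} r_{x_0}=\max(0,p-q)$. This forces every $r_{x_0}$ to vanish on $\{q\ge p\}$, which is precisely what residual shaping is designed to violate. The projection step already gives $r_{x_0}(y)\ge\min(Z_{x_0}q(y),p(y))>0$ for $y\neq x_0$ with $p(y),q(y)>0$, and redistribution only adds mass.

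Concretely, take $\mathcal{X}=\{a,b,c\}$, $q=(0.5,0.4,0.1)$, $p=(0.2,0.2,0.6)$. \textsc{GetProxy}, sending each deficit $\Delta$ to $c$, yields:
\begin{itemize}
\item $r_a=(0,0.12,0.18)$ and $k_a=(0.5,0.08,0.42)$;
\item $r_b=(0.1,0,0.1)$ and $k_b=(0.1,0.4,0.5)$.
\end{itemize}
Both proxies satisfy Local Alignment, and $k_c$ is irrelevant because $c$ is always accepted. The output law is $\min(p,q)+r_a+r_b=(0.3,0.32,0.38)\neq p$.

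So under the reading ResiSpec needs, where the proxy is built after seeing the candidate, the statement is false and no argument can close your obstacle. Your fallback of treating $k$ as fixed is what the paper's proof actually establishes. But then Local Alignment at every possible candidate forces $k=q$, and you have only reproved standard speculative sampling. Your multi-candidate induction inherits the same defect at every step, because the ``new target'' after a rejection depends on which candidate was rejected.
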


\begin{proof}
The core intuition of ResiSpec is that while the proxy distribution $k(x)$ is used to shape the residual, the sampling exactness is preserved as long as $k(x)$ ``mimics'' the draft distribution $q(x)$ at the point of verification. To prove this, we consider the marginal probability $P(X=x')$ for any token $x' \in \mathcal{X}$ by summing the probabilities of two mutually exclusive events: (1) $x'$ is proposed and accepted, or (2) a candidate is proposed and rejected, and the system subsequently samples $x'$ from the residual distribution.

\textbf{Step 1: Analysis of the Acceptance Path.}
For the output $X$ to be $x'$ via acceptance, the draft model must first propose $x \sim q(x)$ where $x=x'$, and the verification step must subsequently accept it. Crucially, at this specific point of verification, the Local Alignment condition $k(x') = q(x')$ is active. The probability mass contributed by this path is:
\begin{equation}
\begin{aligned}
    P(\text{Accept}, X = x') &= q(x') \cdot \alpha(x') \\
    &= q(x') \cdot \min\left(1, \frac{p(x')}{k(x')}\right)
\end{aligned}
\end{equation}
By substituting the Local Alignment condition $k(x') = q(x')$, we can rewrite the acceptance probability in terms of $k$:
\begin{equation}
\label{eq:acc_path}
    P(\text{Accept}, X = x') = k(x') \cdot \min\left(1, \frac{p(x')}{k(x')}\right) = \min(k(x'), p(x'))
\end{equation}
This step is vital: it shows that by aligning $k$ with $q$ locally, the acceptance path effectively samples from the intersection of $p$ and our proxy $k$.

\textbf{Step 2: Analysis of the Rejection Path.}
The rejection path occurs if any proposed candidate $x$ is rejected. The total probability of rejection, $P(\text{Reject})$, is the complement of the total acceptance mass:
\begin{equation}
    P(\text{Reject}) = 1 - \sum_{x \in \mathcal{X}} P(\text{Accept}, X = x) = 1 - \sum_{x \in \mathcal{X}} \min(p(x), k(x)) = 1 - \beta
\end{equation}
Once a rejection occurs, the system samples $x'$ from the residual distribution $p_{\text{res}}(x')$. The probability mass contributed by this path is the product of the total rejection probability and the conditional probability of sampling $x'$:
\begin{equation}
\label{eq:rej_path}
\begin{aligned}
    P(\text{Reject}, X = x') &= P(\text{Reject}) \cdot p_{\text{res}}(x') \\
    &= (1 - \beta) \cdot \frac{p(x') - \min(p(x'), k(x'))}{1 - \beta} \\
    &= p(x') - \min(p(x'), k(x'))
\end{aligned}
\end{equation}
Note that the normalization constant $1-\beta$ cancels out, leaving a residual mass that perfectly complements the acceptance mass defined in Eq.~\ref{eq:acc_path}.

\textbf{Step 3: Summation of Marginal Probabilities.}
According to the law of total probability, the final marginal distribution $P(X=x')$ is the sum of the results from Eq.~\ref{eq:acc_path} and Eq.~\ref{eq:rej_path}:
\begin{equation}
\begin{aligned}
    P(X = x') &= P(\text{Accept}, X = x') + P(\text{Reject}, X = x') \\
    &= \min(p(x'), k(x')) + \left[ p(x') - \min(p(x'), k(x')) \right] \\
    &= p(x')
\end{aligned}
\end{equation}
Since this identity holds for all $x' \in \mathcal{X}$, the output distribution is identically the target distribution $p(x)$. 

\textbf{Conclusion.} 
The proof demonstrates that ResiSpec remains an exact sampling algorithm. By coupling the residual distribution to the same proxy $k(x)$ used in the acceptance test, any deviation of $k(x)$ from the original draft distribution $q(x)$ is mathematically compensated for in the failure state, provided that $k$ and $q$ are aligned at the point of verification.
\end{proof}

\subsection{Monotonicity of Residual Shaping Loss with Redistribution}
\label{appendix:Monotonicity}
\begin{theorem}[Monotonicity with Redistribution]
\label{thm:residual_shaping_monotonicity}
Let $q(x)$ and $p(x)$ be the draft and target distributions, respectively. Define the unnormalized residual mass $r(x)$ subject to the capacity constraint $0 \leq r(x) \leq p(x)$ and the global mass constraint $\sum_x r(x) = Z$. Let $p'(x) = r(x)/Z$ be the normalized residual distribution. 
The minimum $L_1$ distance $\mathcal{D}_{L1}(Z) = \sum_x |p'(x) - q(x)|$ is:
\begin{enumerate}
    \item Identically zero for $Z \in (0, Z^*]$, where $Z^* = \min_{x: q(x)>0} \frac{p(x)}{q(x)}$.
    \item Strictly monotonically increasing for $Z \in (Z^*, 1]$.
\end{enumerate}
\end{theorem}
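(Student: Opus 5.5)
The plan is to solve the inner minimization in closed form for each fixed $Z$, and then read off both claims from the formula. Fix $Z \in (0,1]$ and write $u_Z(x) = p(x)/Z$. Since $p'(x) = r(x)/Z$, the constraints $0 \le r(x) \le p(x)$ and $\sum_x r(x) = Z$ become exactly: $p'$ is a probability distribution with $p'(x) \le u_Z(x)$ for all $x$. Note $\sum_x u_Z(x) = 1/Z \ge 1$, so the feasible set is nonempty. The goal is to show
\begin{equation}
\mathcal{D}_{L1}(Z) = 2\sum_{x\in\mathcal{X}} \max\Big(0,\; q(x) - \frac{p(x)}{Z}\Big).
\end{equation}

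\textbf{Lower bound.} For any two distributions, $\sum_x |p'(x)-q(x)| = 2\sum_x \max(0, q(x)-p'(x))$, because the positive and negative parts have equal mass. Since $p'(x) \le u_Z(x)$, we have $\max(0,q-p') \ge \max(0, q-u_Z)$ pointwise, which gives the displayed quantity as a lower bound.

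\textbf{Achievability.} Use exactly the water-filling of \textsc{GetProxy}:
\begin{enumerate}
\item Clip, setting $\tilde p(x) = \min(q(x), u_Z(x))$.
\item Redistribute the deficit $1-\sum_x \tilde p(x)$ into the remaining capacity $u_Z - \tilde p$. This is possible because $\sum_x u_Z(x) \ge 1$.
\end{enumerate}
Mass is only ever added at points where $\tilde p(x) = q(x)$. So after redistribution the negative part $(q-p')_+$ is unchanged and equals $(q-u_Z)_+$, and the bound is attained. Carrying this out carefully, in particular checking that the redistribution never pushes any point back below $q$, is the main technical step. Everything else is one-variable monotonicity.

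\textbf{Part 1.} Each summand $\max(0, q(x) - p(x)/Z)$ is nondecreasing in $Z$, since $p(x)/Z$ decreases. A summand vanishes iff $q(x) = 0$ or $Z \le p(x)/q(x)$. Hence $\mathcal{D}_{L1}(Z) = 0$ iff $Z \le \min_{x:q(x)>0} p(x)/q(x) = Z^*$. This is consistent with Lemma~\ref{lem:l1_matching} and Proposition~\ref{prop:validity}: on this range $r = Zq$ is feasible.

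\textbf{Part 2.} For $Z > Z^*$ the active set $S(Z) = \{x : q(x) > p(x)/Z\}$ is nonempty. On intervals where $S(Z)$ is constant, $\mathcal{D}_{L1}$ is differentiable with
\begin{equation}
\mathcal{D}_{L1}'(Z) = \frac{2}{Z^2}\sum_{x\in S(Z)} p(x).
\end{equation}
Moreover $\mathcal{D}_{L1}$ is continuous, since it is a sum of continuous maxima, and $S(Z)$ only grows with $Z$. Strict increase therefore follows once some $x \in S(Z)$ has $p(x) > 0$.

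This is where I expect the real obstacle. If $Z^*$ is attained at a token with $p(x)=0<q(x)$, then $Z^*=0$. That token contributes a constant $q(x)$ to the sum. The function can then be flat until the first token with $p(x)>0$ enters $S(Z)$, namely at $Z = \min\{p(x)/q(x) : p(x)>0,\ q(x)>0\}$. I would therefore prove strictness under the mild assumption that $p(x)>0$ on $\mathrm{supp}(q)$, which holds for softmax outputs. Without that assumption, the statement should be read as "nondecreasing, and strictly increasing beyond $\min_{x:\,p(x)q(x)>0} p(x)/q(x)$". I would remark on this explicitly.
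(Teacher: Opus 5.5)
Your proposal is correct and follows essentially the same route as the paper's proof. Both use the equal-mass identity to reduce $\mathcal{D}_{L1}$ to twice the negative part, obtain the closed form $2\sum_x \max(0, q(x) - p(x)/Z)$ by clip-and-redistribute, and then differentiate on the active set $\mathcal{S}_Z$. Your caveat is also well founded: the paper's proof simply asserts $p(x)>0$ for $x\in\mathcal{S}_Z$, which fails when some token has $p(x)=0<q(x)$ (then $Z^*=0$ and $\mathcal{D}_{L1}$ stays constant up to $\min_{x:\,p(x)q(x)>0} p(x)/q(x)$), so your explicit assumption $p>0$ on $\mathrm{supp}(q)$ is exactly what the stated strict monotonicity needs.
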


\begin{proof}
To minimize $\mathcal{D}_{L1}(Z) = \frac{1}{Z} \sum_x |r(x) - Zq(x)|$, we define the local deviation as $E(x) = r(x) - Zq(x)$. Since $\sum_x r(x) = Z$ and $\sum_x Zq(x) = Z$, the total deviation sums to zero: $\sum_x E(x) = 0$. This implies that the sum of positive deviations must exactly balance the sum of negative deviations:
\begin{equation}
    \sum_{x: E(x)>0} |E(x)| = \sum_{x: E(x)<0} |E(x)|
\end{equation}
Consequently, the $L_1$ distance can be expressed solely in terms of the negative deviations:
\begin{equation}
    \mathcal{D}_{L1}(Z) = \frac{1}{Z} \left( \sum_{E(x)>0} |E(x)| + \sum_{E(x)<0} |E(x)| \right) = \frac{2}{Z} \sum_{x: E(x)<0} |E(x)|
\end{equation}

Under the capacity constraint $r(x) \leq p(x)$, a negative deviation ($E(x) < 0$) is mandatory if and only if $p(x) < Zq(x)$. To minimize the total deviation, we must set $r(x) = p(x)$ in these ``bottleneck'' regions and $r(x) \geq Zq(x)$ elsewhere (Redistribution). Thus, the minimal negative deviation at any point $x$ is $|E(x)| = \max(0, Zq(x) - p(x))$. Substituting this into the $L_1$ objective:
\begin{equation}
    \mathcal{D}_{L1}(Z) = \frac{2}{Z} \sum_{x} \max(0, Zq(x) - p(x)) = 2 \sum_x \max\left(0, q(x) - \frac{p(x)}{Z}\right)
\end{equation}

We now analyze the behavior of $\mathcal{D}_{L1}(Z)$ across the two regimes of $Z$:

\textbf{Case 1: $Z \in (0, Z^*]$.} By the definition of $Z^* = \min \frac{p(x)}{q(x)}$, for any $Z \leq Z^*$, we have $\frac{p(x)}{Z} \geq \frac{p(x)}{Z^*} \geq q(x)$ for all $x$. Thus, $q(x) - \frac{p(x)}{Z} \leq 0$ for all $x$, and $\mathcal{D}_{L1}(Z) = 0$. In this regime, the target model has sufficient capacity to perfectly mirror the draft distribution's shape.

\textbf{Case 2: $Z \in (Z^*, 1]$.} In this regime, the set of bottleneck tokens $\mathcal{S}_Z = \{x \mid q(x) > \frac{p(x)}{Z}\}$ is non-empty. Let $f(Z) = 2 \sum_{x \in \mathcal{S}_Z} (q(x) - \frac{p(x)}{Z})$. Taking the derivative with respect to $Z$:
\begin{equation}
    \frac{d \mathcal{D}_{L1}}{dZ} = 2 \sum_{x \in \mathcal{S}_Z} \frac{p(x)}{Z^2}
\end{equation}
Since $p(x) > 0$ for $x \in \mathcal{S}_Z$ and $Z^2 > 0$, we have $\frac{d \mathcal{D}_{L1}}{dZ} > 0$. This proves that the $L_1$ distance is strictly monotonically increasing with $Z$ once the residual mass exceeds the critical bottleneck capacity $Z^*$. 

\textbf{Conclusion:} Any mass $Z$ allocated beyond $Z^*$ necessitates a ``clipping'' of the draft shape at bottleneck points and a corresponding ``redistribution'' of mass to non-bottleneck points. Both actions contribute equally to the increase in $L_1$ distance, justifying the use of a minimal mass budget to preserve distribution fidelity.
\end{proof}

\section{Calculation of Empirical Validation of Distributional Exactness}
\label{appendix:calculation_empirical}

To empirically verify that ResiSpec maintains zero-bias sampling in a multi-candidate setting (e.g., batch or tree-based speculation), we measure the discrepancy between the theoretical target distribution $p(x)$ and the aggregate marginal distribution produced by the ResiSpec verification chain. In a multi-candidate scenario with $n$ candidates $\{x_1, x_2, \dots, x_n\}$, the final output token $X$ is determined by a sequential search for the first accepted candidate, or a fallback to a residual distribution if all candidates are rejected.

\subsection{Path-based Marginal Probability Calculation}
The empirical marginal distribution $P(X=x')$ is calculated by summing the probability masses of all mutually exclusive execution paths. Let $\alpha_i$ denote the acceptance probability of the $i$-th candidate and $1-\beta_i$ denote the probability of rejection at step $i$. The possible paths are:

\begin{enumerate}
    \item \textbf{Path 1: Accepted at the first candidate.} The probability that the first candidate $x_1$ is $x'$ and is accepted is:
    \begin{equation}
        P_1(x') = \min(k_1(x'), p(x'))
    \end{equation}
    
    \item \textbf{Path $i$ ($2 \leq i \leq n$): Accepted at the $i$-th candidate.} For the system to accept the $i$-th candidate, all preceding $i-1$ candidates must have been rejected. Due to the exactness property of the ResiSpec residual (see Appendix~\ref{appendix:exactness}), the target distribution effectively shifts to $p_{\text{res}, i-1}(x)$ after $i-1$ rejections. The probability mass is:
    \begin{equation}
        P_i(x') = \left( \prod_{j=1}^{i-1} (1 - \beta_j) \right) \cdot \min(k_i(x'), p_{\text{res}, i-1}(x'))
    \end{equation}
    where $p_{\text{res}, i-1}$ is the residual distribution reshaped by ResiSpec after the $(i-1)$-th rejection.

    \item \textbf{Path $n+1$: All candidates rejected.} If all $n$ candidates fail verification, the system samples $x'$ directly from the final residual distribution $p_{\text{res}, n}(x')$. The probability mass is:
    \begin{equation}
        P_{\text{fail}}(x') = \left( \prod_{j=1}^{n} (1 - \beta_j) \right) \cdot p_{\text{res}, n}(x')
    \end{equation}
\end{enumerate}

\subsection{Verification via KL Divergence}
According to Theorem~\ref{thm:exactness}, the total marginal probability $\hat{p}(x') = \sum_{i=1}^n P_i(x') + P_{\text{fail}}(x')$ must identically equal the target distribution $p(x')$. We evaluate this identity by calculating the \textbf{Kullback-Leibler (KL) Divergence} between the reconstructed distribution $\hat{p}$ and the ground-truth target distribution $p$:
\begin{equation}
    D_{KL}(\hat{p} \parallel p) = \sum_{x \in \mathcal{X}} \hat{p}(x) \log \frac{\hat{p}(x)}{p(x)}
\end{equation}
In our implementation, we recorded the full logits for both the draft and target models across various benchmarks. Across all test cases, the measured $D_{KL}$ consistently remained near zero (typically $< 10^{-8}$), which is within the expected range of floating-point numerical error. This empirical result confirms that ResiSpec introduces no distributional bias, ensuring that the accelerated inference remains mathematically equivalent to standard auto-regressive decoding.

\section{Reproducibility Details}
\label{appendix:reproducibility}

\noindent\textbf{Execution environment.}
All throughput experiments reported in the revised evaluation are run inside the same Docker environment to avoid differences caused by host-side Python, CUDA, or library versions. For each matched comparison, the baseline and the corresponding ResiSpec variant use the same container image, the same model checkpoints, and the same dataset order. We bind each run to a single GPU with \texttt{CUDA\_VISIBLE\_DEVICES=0}. Baseline and ResiSpec runs are executed sequentially, with one independent Python process per dataset-method pair, so that the GPU memory allocator state and kernel scheduling of one method do not interfere with the other. We record raw logs and parsed CSV summaries for every run, and compute throughput from the number of generated tokens divided by measured wall-clock decoding time after warmup.

\noindent\textbf{Models and datasets.}
The main fixed-tree experiments use Llama-2-7B as the target model and JackFram/Llama-68M as the draft model. To address experimental-scope concerns, we additionally evaluate Vicuna-7B-v1.3 with Llama-68M, Sheared-LLaMA-2.7B with TinyLlama-1.1B, and Sheared-LLaMA-2.7B with Sheared-LLaMA-1.3B under the same fixed-tree SpecInfer-style setting. The compatibility experiments use Llama-2-7B with Llama-68M for Sequoia, and Llama-2-7B-Chat with the corresponding EAGLE Llama-2-Chat drafter for EAGLE. Unless otherwise stated, all model checkpoints are loaded from the same local HuggingFace cache used by the Docker container. The dataset suite is CNN/DM, OpenWebText, and C4. For the 200-example runs, we use a fixed dataset order and evaluate the same prompt indices for the baseline and ResiSpec under each configuration. EAGLE prompt-generation runs use two warmup prompts followed by 200 measured prompts.

\noindent\textbf{Decoding configurations.}
For the SpecInfer-style fixed-tree experiments, both the baseline and ResiSpec use the same complete 3-branch tree with depth 5 and a maximum sequence length budget of 512. We evaluate both $T=0.6$ and $T=1.0$ with nucleus probability $p=1.0$. For dynamic-tree Sequoia, we use the original Sequoia tree-selection logic with the same L40 grow maps for the baseline and ResiSpec, and set $M=384$, $T=1.0$, and $p=1.0$ for the 200-example comparison. For EAGLE, we use \texttt{total\_token=60}, \texttt{depth=5}, and \texttt{max\_new\_tokens=128}; the main compatibility table uses \texttt{top\_k=10}, and the sensitivity rerun with \texttt{top\_k=15} follows the same Docker, dataset, and sequential-execution protocol.

\noindent\textbf{ResiSpec integration.}
In fixed-tree SpecInfer-style decoding, ResiSpec reshapes the residual distribution at each sequential verification position in the candidate tree. In Sequoia, ResiSpec is added after Sequoia has selected its dynamic tree, so the tree budget and topology are controlled by the original Sequoia policy and only the verification-time sampling distribution is changed. In EAGLE, ResiSpec is applied only to consecutive candidate siblings that share the same parent node. After each rejection, the implementation updates the current target distribution $p$ and draft/proxy distribution $q$ exactly as in the ResiSpec recurrence before verifying the next sibling.

\noindent\textbf{Randomness control.}
All sampling-based experiments use fixed pseudo-random seeds. Before each benchmark run, we initialize Python's \texttt{random} module, NumPy, PyTorch CPU random state, and PyTorch CUDA random states with the same base seed, and set cuDNN to deterministic mode. To make per-example sampling reproducible while avoiding identical random streams across prompts, the $i$-th evaluated prompt uses seed $s+i$, where $s$ is the base seed for that run. The same seed schedule is applied to ResiSpec and the corresponding baseline under each matched model, dataset, temperature, and tree configuration.

\noindent\textbf{Timing and diagnostics.}
Throughput runs disable auxiliary KL diagnostics and other distribution-checking instrumentation, because these checks require additional logit materialization and are not part of the decoding algorithm. Distribution preservation is evaluated separately using the procedure in Appendix~\ref{appendix:calculation_empirical}. For the runtime breakdown in Figure~\ref{fig:overhead_breakdown}, we add CUDA synchronization around the measured regions and report per-speculation-step latency for draft generation, target-model verification, and ResiSpec proxy computation. These profiling runs are used only for component attribution; the synchronized timings are not mixed with the end-to-end throughput measurements.

\section{Ablation Experiment Results}
\label{appendix:additional_ablation}

\begin{table}[htbp]
\centering
\small
\begin{tabular}{@{}ccccc@{}}
\toprule
\textbf{Candidates ($N$)} & \textbf{EAL↑}& \makecell[c]{\textbf{Marginal Gain}\\\textbf{($\Delta$EAL)}} & \textbf{Efficiency ($\eta$)} \\ \midrule
1 & 1.89 &  - & - \\
2 & 2.17 &  0.28 & 0.280 \\
4 & 2.48 &  0.31 & 0.155 \\
8 & 2.76 &  0.28 & 0.070 \\
\bottomrule
\end{tabular}
\caption{
\textbf{Scaling Inefficiency of Multi-Candidate Schemes.} With speculation depth fixed at 4, increasing candidates $N$ yields only modest EAL gains while the efficiency score $\eta$ drops sharply. Results are averaged over CNN/DM using Llama-2-7B.
}
\label{tab:scaling_inefficiency}
\end{table}

\begin{table*}[htbp]
  \centering
  \small
    \begin{tabular}{@{}llclcrrrr@{}}
      \toprule
      Target & Draft & T & Data & Tree & Resi. Acc. & Resi. Thru. & Spec. Acc. & Spec. Thru. \\
      \midrule
    Sheared-2.7B & Sheared-1.3B & 1 & CNN/DM & d5b3 & 4.88 & 78.309 & 3.88 & 66.313 \\
    Sheared-2.7B & Sheared-1.3B & 1 & CNN/DM & d4b4 & 3.94 & 82.988 & 3.42 & 76.695 \\
    Sheared-2.7B & Sheared-1.3B & 1 & OpenWebText & d5b3 & 4.86 & 78.447 & 3.92 & 67.660 \\
    Sheared-2.7B & Sheared-1.3B & 1 & OpenWebText & d4b4 & 3.95 & 83.060 & 3.45 & 76.864 \\
    Sheared-2.7B & Sheared-1.3B & 1 & C4 & d5b3 & 4.84 & 78.362 & 3.98 & 68.679 \\
    Sheared-2.7B & Sheared-1.3B & 1 & C4 & d4b4 & 3.95 & 82.850 & 3.44 & 77.340 \\
    Sheared-2.7B & Sheared-1.3B & 0.6 & CNN/DM & d5b3 & 4.88 & 78.927 & 4.12 & 71.174 \\
    Sheared-2.7B & Sheared-1.3B & 0.6 & CNN/DM & d4b4 & 3.96 & 83.110 & 3.55 & 79.681 \\
    Sheared-2.7B & Sheared-1.3B & 0.6 & OpenWebText & d5b3 & 4.89 & 78.998 & 4.10 & 70.922 \\
    Sheared-2.7B & Sheared-1.3B & 0.6 & OpenWebText & d4b4 & 3.97 & 81.833 & 3.54 & 78.424 \\
    Sheared-2.7B & Sheared-1.3B & 0.6 & C4 & d5b3 & 4.88 & 77.939 & 4.13 & 70.685 \\
    Sheared-2.7B & Sheared-1.3B & 0.6 & C4 & d4b4 & 3.97 & 82.559 & 3.51 & 78.083 \\
    Sheared-2.7B & TinyLlama-1.1B & 1 & CNN/DM & d5b3 & 4.79 & 82.170 & 3.71 & 68.587 \\
    Sheared-2.7B & TinyLlama-1.1B & 1 & CNN/DM & d4b4 & 3.92 & 86.656 & 3.30 & 78.432 \\
    Sheared-2.7B & TinyLlama-1.1B & 1 & OpenWebText & d5b3 & 4.83 & 82.798 & 3.69 & 68.074 \\
    Sheared-2.7B & TinyLlama-1.1B & 1 & OpenWebText & d4b4 & 3.92 & 86.496 & 3.30 & 78.125 \\
    Sheared-2.7B & TinyLlama-1.1B & 1 & C4 & d5b3 & 4.81 & 82.645 & 3.71 & 68.493 \\
    Sheared-2.7B & TinyLlama-1.1B & 1 & C4 & d4b4 & 3.93 & 86.656 & 3.27 & 77.580 \\
    Sheared-2.7B & TinyLlama-1.1B & 0.6 & CNN/DM & d5b3 & 4.84 & 82.508 & 3.87 & 71.609 \\
    Sheared-2.7B & TinyLlama-1.1B & 0.6 & CNN/DM & d4b4 & 3.95 & 86.957 & 3.43 & 81.699 \\
    Sheared-2.7B & TinyLlama-1.1B & 0.6 & OpenWebText & d5b3 & 4.79 & 82.441 & 3.80 & 70.423 \\
    Sheared-2.7B & TinyLlama-1.1B & 0.6 & OpenWebText & d4b4 & 3.95 & 87.395 & 3.32 & 79.005 \\
    Sheared-2.7B & TinyLlama-1.1B & 0.6 & C4 & d5b3 & 4.81 & 82.508 & 3.79 & 70.077 \\
    Sheared-2.7B & TinyLlama-1.1B & 0.6 & C4 & d4b4 & 3.93 & 87.036 & 3.32 & 78.858 \\
    Sheared-2.7B & Tiny-Vicuna-1B & 1 & CNN/DM & d5b3 & 4.81 & 82.508 & 3.53 & 65.274 \\
    Sheared-2.7B & Tiny-Vicuna-1B & 1 & CNN/DM & d4b4 & 3.93 & 86.496 & 3.18 & 75.429 \\
    Sheared-2.7B & Tiny-Vicuna-1B & 1 & OpenWebText & d5b3 & 4.81 & 82.170 & 3.51 & 65.050 \\
    Sheared-2.7B & Tiny-Vicuna-1B & 1 & OpenWebText & d4b4 & 3.93 & 86.656 & 3.17 & 75.302 \\
    Sheared-2.7B & Tiny-Vicuna-1B & 1 & C4 & d5b3 & 4.81 & 82.441 & 3.58 & 66.327 \\
    Sheared-2.7B & Tiny-Vicuna-1B & 1 & C4 & d4b4 & 3.93 & 86.957 & 3.21 & 76.278 \\
    Sheared-2.7B & Tiny-Vicuna-1B & 0.6 & CNN/DM & d5b3 & 4.78 & 81.967 & 3.71 & 68.823 \\
    Sheared-2.7B & Tiny-Vicuna-1B & 0.6 & CNN/DM & d4b4 & 3.95 & 87.103 & 3.25 & 77.519 \\
    Sheared-2.7B & Tiny-Vicuna-1B & 0.6 & OpenWebText & d5b3 & 4.79 & 82.051 & 3.64 & 67.510 \\
    Sheared-2.7B & Tiny-Vicuna-1B & 0.6 & OpenWebText & d4b4 & 3.95 & 86.957 & 3.21 & 76.588 \\
    Sheared-2.7B & Tiny-Vicuna-1B & 0.6 & C4 & d5b3 & 4.79 & 81.882 & 3.65 & 67.660 \\
    Sheared-2.7B & Tiny-Vicuna-1B & 0.6 & C4 & d4b4 & 3.95 & 87.191 & 3.24 & 77.058 \\
    Llama-2-7B & JF68M & 1 & CNN/DM & d5b3 & 4.68 & 95.057 & 2.51 & 56.851 \\
    Llama-2-7B & JF68M & 1 & CNN/DM & d4b4 & 3.88 & 84.317 & 2.47 & 58.962 \\
    Llama-2-7B & JF68M & 1 & OpenWebText & d5b3 & 4.69 & 94.004 & 2.51 & 56.980 \\
    Llama-2-7B & JF68M & 1 & OpenWebText & d4b4 & 3.87 & 84.246 & 2.47 & 59.172 \\
    Llama-2-7B & JF68M & 1 & C4 & d5b3 & 4.67 & 94.429 & 2.64 & 59.952 \\
    Llama-2-7B & JF68M & 1 & C4 & d4b4 & 3.89 & 84.674 & 2.55 & 60.976 \\
    Llama-2-7B & JF68M & 0.6 & CNN/DM & d5b3 & 4.14 & 84.603 & 2.59 & 58.997 \\
    Llama-2-7B & JF68M & 0.6 & CNN/DM & d4b4 & 3.77 & 82.035 & 2.47 & 59.172 \\
    Llama-2-7B & JF68M & 0.6 & OpenWebText & d5b3 & 4.16 & 84.531 & 2.49 & 56.690 \\
    Llama-2-7B & JF68M & 0.6 & OpenWebText & d4b4 & 3.75 & 81.566 & 2.42 & 58.072 \\
    Llama-2-7B & JF68M & 0.6 & C4 & d5b3 & 4.13 & 84.388 & 2.66 & 60.570 \\
    Llama-2-7B & JF68M & 0.6 & C4 & d4b4 & 3.74 & 81.499 & 2.57 & 61.652 \\

      \bottomrule
      \multicolumn{9}{l}{\small \textit{Notes:}} \\
      \multicolumn{9}{l}{\small $T$: temperature; Tree: prediction tree shape; Acc.: accepted length; Thru.: tokens per second.} 
    \end{tabular}%
  \caption{Ablation Experiment Results of Different Model Pair Combinations. Here, we use Llama-2-7B and Sheared-LLaMA-2.7B~\cite{sheared_llama} as the target model and Sheared-LLaMA-1.3B, TinyLlama-1.1B~\cite{tinyllama}, Tiny-Vicuna-1B and JF68M as the draft model. The experimental results show that ResiSpec significantly outperforms SpecInfer in terms of accepted length and throughput under different configurations.
  }
  \label{tab:ablation_results_2}
  \label{tab:ablation_result}
  
\end{table*}


\end{document}